\newif\ifnips
\definecolor{Gray}{gray}{0.95}
\definecolor{DarkGray}{gray}{0.9}
\newcolumntype{a}{>{\columncolor{Gray}[1pt][1pt]}r}
\newcolumntype{b}{>{\columncolor{DarkGray}[1pt][1pt]}r}
\newenvironment{proof-sketch}{\noindent{\bf Sketch of Proof}\hspace*{1em}}{\qed\bigskip}
\newenvironment{proof-idea}{\noindent{\bf Proof Idea}\hspace*{1em}}{\qed\bigskip}
\newenvironment{proof-of-lemma}[1]{\noindent{\bf Proof of Lemma #1}\hspace*{1em}}{\qed\bigskip}
\newenvironment{proof-attempt}{\noindent{\bf Proof Attempt}\hspace*{1em}}{\qed\bigskip}
 \gdef\xxxmark{%
   \expandafter\ifx\csname @mpargs\endcsname\relax % in minipage?
     \expandafter\ifx\csname @captype\endcsname\relax % in figure/caption?
       \marginpar{\textcolor{red}{xxx~}}% not in a caption or minipage, can use marginpar
     \else
       \textcolor{red}{xxx~}% notice trailing space
     \fi
   \else
     \textcolor{red}{xxx~}% notice trailing space
   \fi}
 \gdef\xxx{\@ifnextchar[\xxx@lab\xxx@nolab}
 \long\gdef\xxx@lab[#1]#2{{\bf [\xxxmark \textcolor{red}{#2} ---{\sc #1}]}}
 \long\gdef\xxx@nolab#1{{\bf [\xxxmark \textcolor{red}{#1}]}}
\newcommand{\R}{\mathbb{R}}
\DeclareMathOperator*{\Exp}{\mathbb{E}}
\DeclareMathOperator*{\argmax}{arg\,max}
\DeclareMathOperator*{\argmin}{arg\,min}
\newcommand{\calA}{\ensuremath{\mathcal{A}}}
\newcommand{\calD}{\ensuremath{\mathcal{D}}}
\newcommand{\calG}{\ensuremath{\mathcal{G}}}
\newcommand{\calH}{\ensuremath{\mathcal{H}}}
\newcommand{\calN}{\ensuremath{\mathcal{N}}}
\newcommand{\calP}{\ensuremath{\mathcal{P}}}
\newcommand{\calQ}{\ensuremath{\mathcal{Q}}}
\newcommand{\calR}{\ensuremath{\mathcal{R}}}
\newcommand{\calZ}{\ensuremath{\mathcal{Z}}}
\theoremstyle{plain}
\newtheorem{theorem}{Theorem}
\newtheorem{lemma}[theorem]{Lemma}
\newtheorem{claim}[theorem]{Claim}
\theoremstyle{definition}
\newtheorem{myremark}[theorem]{Remark}
\newenvironment{myenumerate}
{\begin{enumerate}[leftmargin=5.5mm, itemsep=1pt, topsep=0pt]}
	{\end{enumerate}}
\renewcommand{\wp}{\ \textnormal{w.p.}\ }
\newcommand{\sign}{\textrm{sign}}
\newcommand{\relu}{\textrm{ReLU}}
\renewcommand{\vec}[1]{\mathbold{#1}}
\newcommand{\Var}{\operatorname{Var}}
\newcommand{\OPT}{\operatorname{OPT}}
\title{Adversarial Training and Robustness for \\ Multiple Perturbations}
\author{
 Florian Tram\`er \\
 Stanford University \\
 \and
 \ifnips
 \textbf{Dan Boneh} \\
 \else
 Dan Boneh\\
 \fi
 Stanford University
}
\date{}
\begin{document}
% \nipsfinalcopy is no longer used

\maketitle
%\vspace{-3em}

\begin{abstract}
Defenses against adversarial examples, such as adversarial training, are typically tailored to a single perturbation type (e.g., small $\ell_\infty$-noise). For other perturbations, these defenses offer no guarantees and, at times, even increase the model's vulnerability.
Our aim is to understand the reasons underlying this robustness trade-off, and to train models that are simultaneously robust to multiple perturbation types.

We prove that a trade-off in robustness to different types of $\ell_p$-bounded and spatial perturbations must exist in a natural and simple statistical setting.
We corroborate our formal analysis by demonstrating similar robustness trade-offs on MNIST and CIFAR10. We propose new multi-perturbation adversarial training schemes, as well as an efficient attack for the $\ell_1$-norm, and use these to show that models trained against multiple attacks fail to achieve robustness competitive with that of models trained on each attack individually. 
In particular, we find that adversarial training with first-order $\ell_\infty, \ell_1$ and $\ell_2$ attacks on MNIST achieves merely $50\%$ robust accuracy, partly because of gradient-masking.
Finally, we propose \emph{affine attacks} that linearly interpolate between perturbation types and further degrade the accuracy of adversarially trained models.
\end{abstract}

\section{Introduction}
Adversarial examples~\cite{szegedy2013intriguing, goodfellow2014explaining} are proving to be an inherent blind-spot in machine learning (ML) models. Adversarial examples highlight the tendency of ML models to learn superficial and brittle data statistics~\cite{jo2017measuring, geirhos2018imagenet,ilyas2019adversarial}, and present a security risk for models deployed in cyber-physical systems (e.g., virtual assistants~\cite{carlini2016hidden}, malware detectors~\cite{grosse2016adversarial} or ad-blockers~\cite{tramer2018adblock}).

Known successful defenses are tailored to a specific perturbation type (e.g., a small $\ell_p$-ball~\cite{madry2018towards,raghunathan2018certified,wong2018provable} or small spatial transforms~\cite{engstrom2017rotation}). These defenses provide empirical (or certifiable) robustness guarantees for one perturbation type, but typically offer no guarantees against other attacks~\cite{sharma2017attacking, schott2018towards}. Worse, increasing robustness to one perturbation type has sometimes been found to increase vulnerability to others~\cite{engstrom2017rotation, schott2018towards}. This leads us to the central problem considered in this paper:
\begingroup
\addtolength\leftmargini{-0.2in}
%\ifnips\vspace{-0.5em}\fi
\begin{quote}
\emph{Can we achieve adversarial robustness to different types of perturbations simultaneously?}
\end{quote} 
%\ifnips\vspace{-0.5em}\fi
\endgroup
Note that even though prior work has attained robustness to different perturbation types~\cite{madry2018towards, schott2018towards, engstrom2017rotation}, these results may not compose. For instance, an ensemble of two classifiers---each of which is robust to a single type of perturbation---may be robust to neither perturbation. Our aim is to study the extent to which it is possible to learn models that are \emph{simultaneously} robust to multiple types of perturbation.

To gain intuition about this problem, 
we first study a simple and natural classification task, that has been used to analyze trade-offs between standard and adversarial accuracy~\cite{tsipras2019robustness}, and the sample-complexity of adversarial generalization~\cite{schmidt2018adversarially}. 
We define \emph{Mutually Exclusive Perturbations (MEPs)} as pairs of perturbation types for which robustness to one type implies vulnerability to the other. For this task, we prove that $\ell_\infty$ and $\ell_1$-perturbations are MEPs and that $\ell_\infty$-perturbations and input rotations and translations~\cite{engstrom2017rotation} are also MEPs.
Moreover, for these MEP pairs, we find that robustness to either perturbation type requires fundamentally different features. The existence of such a trade-off for this simple classification task suggests that it may be prevalent in more complex statistical settings.
%, e.g., in the superficial and brittle data statistics that ML models learn in real datasets.

To complement our formal analysis, we introduce new adversarial training schemes for multiple perturbations. For each training point, these schemes build adversarial examples for all perturbation types and then train either on all examples (the ``avg'' strategy) or only the worst example (the ``max'' strategy). These two strategies respectively minimize the \emph{average} error rate across perturbation types, or the error rate against an adversary that picks the worst perturbation type for each input.

For adversarial training to be practical, we also need efficient and strong attacks~\cite{madry2018towards}. We show that Projected Gradient Descent~\cite{kurakin2016scale, madry2018towards} is inefficient in the $\ell_1$-case, and design a new attack, \emph{Sparse $\ell_1$ Descent} (SLIDE), that is both efficient and competitive with strong optimization attacks~\cite{chen2018ead}, 

%We also design a new Projected Gradient Descent (PGD) attack for the $\ell_1$-norm, of independent interest. 
%We show that iterated steepest descent~\cite{madry2018tutorial} leads to a poor $\ell_1$-attack, and propose an improved gradient update step to rival the state-of-the-art (and much more expensive) EAD attack~\cite{chen2018ead}.

%As we will show, achieving robustness to all $\ell_p$-bounded perturbations requires novel techniques, to circumvent pernicious \emph{gradient-masking}~\cite{papernot2016practical, tramer2018ensemble,athalye2018obfuscated} issues.

\begin{figure}[t]
	\small
	\centering
	\begin{subfigure}[t]{0.49\textwidth}
		\centering
		\includegraphics[width=\textwidth]{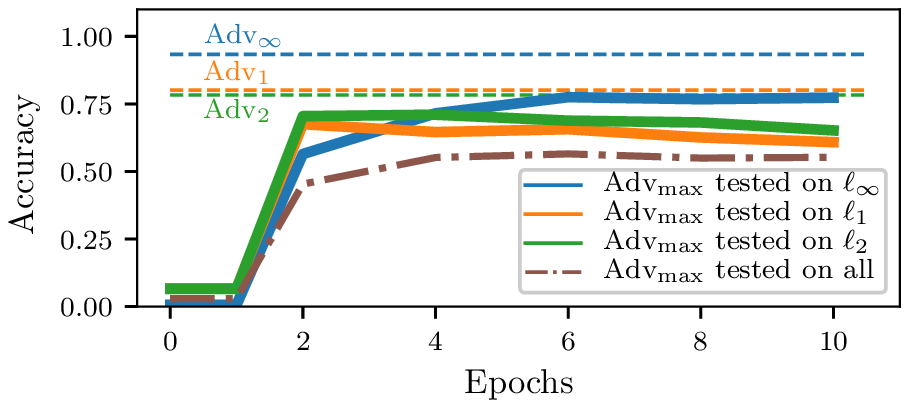}
		%\ifnips\vspace*{-5mm}\fi
		\caption{MNIST models trained on $\ell_1,\ell_2$ \& $\ell_\infty$ attacks.}
	\end{subfigure}
	\hfill
	\begin{subfigure}[t]{0.49\textwidth}
		\centering
		\includegraphics[width=\textwidth]{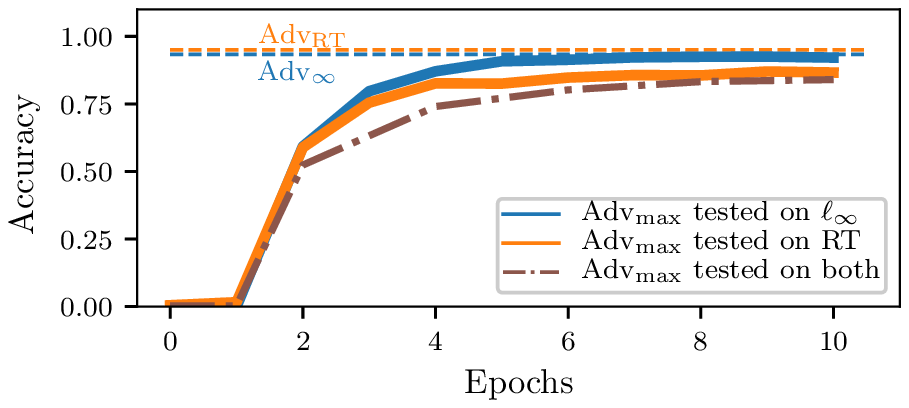}
		%\ifnips\vspace*{-5mm}\fi
		\caption{MNIST models trained on $\ell_\infty$ and RT attacks.}
	\end{subfigure}
	%\ifnips\\[0.25em]\else\\[0.75em]\fi
	\\[0.75em]
	\begin{subfigure}{0.49\textwidth}
		\centering
		\includegraphics[width=\textwidth]{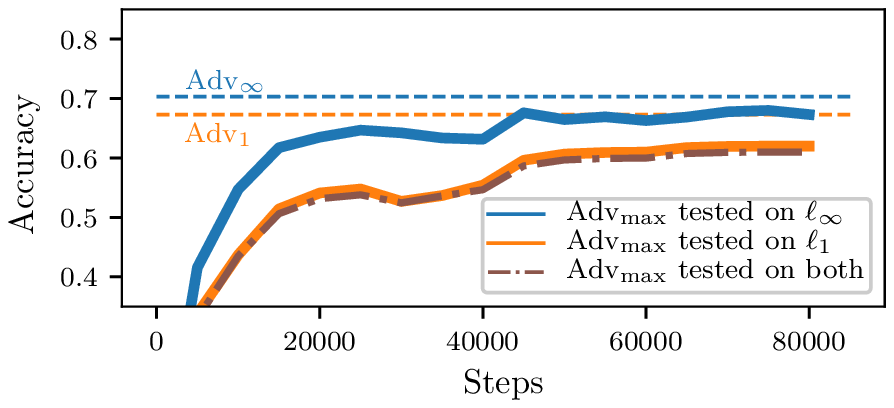}
		%\ifnips\vspace*{-5mm}\fi
		\caption{CIFAR10 models trained on $\ell_1$ and $\ell_\infty$ attacks.}
	\end{subfigure}
	\hfill
	\begin{subfigure}{0.49\textwidth}
		\centering
		\includegraphics[width=\textwidth]{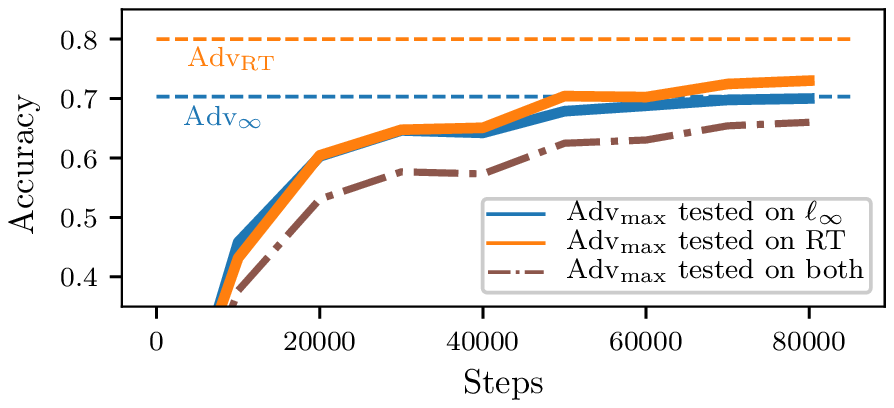}
		%\ifnips\vspace*{-5mm}\fi
		\caption{CIFAR10 models trained on $\ell_\infty$ and RT attacks.}
	\end{subfigure}
	%\ifnips\vspace{-0.5em}\fi
	\caption{\textbf{Robustness trade-off on MNIST (top) and CIFAR10 (bottom).} For a union of $\ell_p$-balls (left), or of $\ell_\infty$-noise and rotation-translations (RT) (right), we train models Adv$_\text{max}$ on the strongest perturbation-type for each input. We report the test accuracy of Adv$_\text{max}$
	against each individual perturbation type (solid line) and against their union (dotted brown line). The vertical lines show the adversarial accuracy of models trained and evaluated on a single perturbation type.
	%\ifnips\\[-2em]\fi
    }
	\label{fig:training}
\end{figure}

We experiment with MNIST and CIFAR10. MNIST is an interesting case-study, as \emph{distinct} models from prior work attain strong robustness to all perturbations we consider~\cite{madry2018towards,schott2018towards, engstrom2017rotation}, yet no \emph{single} classifier is robust to all attacks~\cite{schott2018towards, openreview,engstrom2017rotation}.
For models trained on multiple $\ell_p$-attacks ($\ell_1, \ell_2, \ell_\infty$ for MNIST, and $\ell_1, \ell_\infty$ for CIFAR10), or on both $\ell_\infty$ and spatial transforms~\cite{engstrom2017rotation}, we confirm a noticeable robustness trade-off. 
Figure~\ref{fig:training} plots the test accuracy of models Adv$_\text{max}$ trained using our ``max'' strategy. In all cases, robustness to multiple perturbations comes at a cost---usually of $5$-$10\%$ additional error---compared to models trained against each attack individually (the horizontal lines). 
%The robustness trade-off is often due to poor \emph{adversarial generalization}~\cite{schmidt2018adversarially}, i.e., the  Adv$_\text{max}$ models get $100\%$ \emph{training} accuracy for all perturbations they are trained against.

Robustness to $\ell_1, \ell_2$ and $\ell_\infty$-noise on MNIST is a striking failure case, where the robustness trade-off is compounded by  \emph{gradient-masking}~\cite{papernot2016practical, tramer2018ensemble, athalye2018obfuscated}. Extending prior observations~\cite{madry2018towards,schott2018towards,li2018second}, we show that models trained against an $\ell_\infty$-adversary learn representations that \emph{mask gradients} for attacks in other $\ell_p$-norms. When trained against first-order $\ell_1, \ell_2$ and $\ell_\infty$-attacks, the model learns to resist $\ell_\infty$-attacks while giving the illusion of robustness to $\ell_1$ and  $\ell_2$ attacks. This model only achieves $52\%$ accuracy when evaluated on gradient-free attacks~\cite{brendel2018decision,schott2018towards}.
This shows that, unlike previously thought~\cite{tsipras2019robustness}, adversarial training with strong first-order attacks can suffer from gradient-masking. We thus argue that attaining robustness to $\ell_p$-noise on MNIST requires new techniques (e.g., training on expensive gradient-free attacks, or scaling certified defenses to multiple perturbations). 

MNIST has sometimes been said to be a poor dataset for evaluating adversarial examples defenses, as some attacks are easy to defend against (e.g., input-thresholding or binarization works well for $\ell_\infty$-attacks~\cite{tsipras2019robustness,schott2018towards}). Our results paint a more nuanced view: the simplicity of these $\ell_\infty$-defenses becomes a disadvantage when training against multiple $\ell_p$-norms. We thus believe that MNIST should not be abandoned as a benchmark just yet. Our inability to achieve multi-$\ell_p$ robustness for this simple dataset raises questions about the viability of scaling current defenses to more complex tasks.

Looking beyond adversaries that choose from a union of perturbation types, 
we introduce a new \emph{affine adversary} that may linearly interpolate between perturbations (e.g., by compounding $\ell_\infty$-noise with a small rotation). We prove that for locally-linear models, robustness to a union of $\ell_p$-perturbations implies robustness to affine attacks. In contrast, affine combinations of $\ell_\infty$ and spatial perturbations are provably stronger than either perturbation individually. We show that this discrepancy translates to neural networks trained on real data. Thus, in some cases, attaining robustness to a union of perturbation types remains insufficient against a more creative adversary that composes perturbations.

Our results show that despite recent successes in achieving robustness to single perturbation types, many obstacles remain towards attaining truly robust models. Beyond the robustness trade-off, efficient computational scaling of current defenses to multiple perturbations remains an open problem.

The code used for all of our experiments can be found here: \url{https://github.com/ftramer/MultiRobustness}

\ifnips
Proofs of all theorems, experimental setups, and additional experiments are in the full version of  this extended abstract~\cite{fullversion}.
\fi

%Beyond the robustness trade-off we uncover, the scalability of adversarial training to multiple perturbation types is another point of concern (e.g., our new adversarial training modes scale linearly with the number of perturbation types). Even if robustness to a large set of perturbations types were attainable, this would require new and substantially faster training techniques.

%\input{prelims}
\section{Theoretical Limits to Multi-perturbation Robustness}
\label{sec:theory}

We study statistical properties of adversarial robustness in a natural statistical model introduced in~\cite{tsipras2019robustness}, and which exhibits many phenomena observed on real data, such as trade-offs between robustness and accuracy~\cite{tsipras2019robustness} or a higher sample complexity for robust generalization~\cite{schott2018towards}.
This model also proves useful in analyzing and understanding adversarial robustness for multiple perturbations. Indeed, we prove a number of results that correspond to phenomena we observe on real data, in particular trade-offs in robustness to different $\ell_p$ or rotation-translation attacks~\cite{engstrom2017rotation}. %

We follow a line of works that study distributions for which adversarial examples exist \emph{unconditionally}~\cite{tsipras2019robustness, khoury2019geometry, shafahi2019adversarial, fawzi2018adversarial, gilmer2018adversarial, mahloujifar2018curse}. These distributions, including ours, are much simpler than real-world data, and thus need not be evidence that adversarial examples are inevitable in practice. Rather, we hypothesize that current ML models are highly vulnerable to adversarial examples because they learn superficial data statistics~\cite{jo2017measuring, geirhos2018imagenet,ilyas2019adversarial} that share some properties of these simple distributions.

In prior work, a robustness trade-off for $\ell_\infty$ and $\ell_2$-noise is shown in~\cite{khoury2019geometry} for data distributed over two concentric spheres. Our conceptually simpler model has the advantage of yielding results beyond $\ell_p$-norms (e.g., for spatial attacks) and which apply symmetrically to both classes. Building on work by Xu et al.~\cite{xu2009robustness}, Demontis et al.~\cite{demontis2016security} show a robustness trade-off for dual norms (e.g., $\ell_\infty$ and $\ell_1$-noise) in linear classifiers.

\subsection{Adversarial Risk for Multiple Perturbation Models}
\label{ssec:risk}

Consider a classification task for a distribution $\calD$ over examples $\vec{x} \in \R^d$ and labels $y \in [C]$. Let $f : \R^d \to [C]$ denote a classifier and let $l(f(\vec{x}), y)$ be the zero-one loss (i.e., $\mathbbm{1}_{f(\vec{x}) \neq y}$).

We assume $n$ \emph{perturbation types}, each characterized by a set $S$ of allowed perturbations for an input $\vec{x}$. The set $S$ can be an $\ell_p$-ball~\cite{szegedy2013intriguing, goodfellow2014explaining} or capture other perceptually small transforms such as image rotations and translations~\cite{engstrom2017rotation}. For a perturbation $\vec{r} \in S$, an adversarial example is $\hat{\vec{x}} = \vec{x} + \vec{r}$ (this is pixel-wise addition for $\ell_p$ perturbations, but can be a more complex operation, e.g., for rotations).

For a perturbation set $S$ and model $f$, we define $\calR_{\text{adv}}(f; S) \coloneqq \Exp_{(\vec{x}, y) \sim \calD} \left[\max_{\vec{r} \in S}\ l(f(\vec{x}+\vec{r}), y)\right] $ as the adversarial error rate.
To extend $\calR_{\text{adv}}$ to multiple perturbation sets $S_1, \dots, S_n$, we can consider the \emph{average} error rate for each $S_i$, denoted $\calR^{\text{avg}}_{\text{adv}}$. This metric most clearly captures the trade-off in robustness across independent perturbation types, but is not the most appropriate from a security perspective on adversarial examples. A more natural metric, denoted  $\calR^{\text{max}}_{\text{adv}}$, is the error rate against an adversary that picks, for each input, the worst perturbation from the \emph{union} of the $S_i$. More formally,
\begin{equation}
\label{eq:robustness_measures}
\textstyle \calR^{\text{max}}_{\text{adv}}(f; S_1, \dots, S_n) \coloneqq \calR_{\text{adv}}(f; \cup_i S_i) \;, \quad \calR^{\text{avg}}_{\text{adv}}(f; S_1, \dots, S_n) \coloneqq \frac1n \sum_i \calR_{\text{adv}}(f; S_i) \;.
\end{equation}

Most results in this section are \emph{lower bounds} on $\calR^{\text{avg}}_{\text{adv}}$, which also hold for $\calR^{\text{avg}}_{\text{max}}$ since $\calR^{\text{max}}_{\text{adv}} \geq \calR^{\text{avg}}_{\text{adv}}$.

Two perturbation types $S_1, S_2$ are \emph{Mutually Exclusive Perturbations (MEPs)}, if $\calR^{\text{avg}}_{\text{adv}}(f; S_1, S_2) \geq {1}/{|C|}$ for all models $f$ (i.e., no model has non-trivial average risk against both perturbations).

\subsection{A binary classification task} 
\label{ssec:dist}
We analyze the adversarial robustness trade-off for different perturbation types in a natural statistical model introduced by Tsipras et al.~\cite{tsipras2019robustness}.
Their binary classification task consists of input-label pairs $(\vec{x}, y)$ sampled from a distribution $\calD$ as follows (note that $\calD$ is $(d+1)$-dimensional):
\begin{equation}
\textstyle
y \stackrel{\emph{u.a.r}}{\sim} \{-1, +1\}, \quad 
x_0 = 
\begin{cases} 
+y, \wp p_0, \\
-y, \wp 1-p_0
\end{cases}, \quad
x_1, \dots, x_d \stackrel{\emph{i.i.d}}{\sim} \calN(y\eta, 1) \;,
\end{equation}
where $p_0\geq 0.5$, $\calN(\mu, \sigma^2)$ is the normal distribution and $\eta = {\alpha}/{\sqrt{d}}$ for some positive constant $\alpha$. 

For this distribution, Tsipras et al.~\cite{tsipras2019robustness} show a trade-off between standard and adversarial accuracy (for $\ell_\infty$ attacks), by drawing a distinction between the ``robust'' feature $x_0$ that small $\ell_\infty$-noise cannot manipulate, and the ``non-robust'' features $x_1, \dots, x_d$ that can be fully overridden by small $\ell_\infty$-noise.

\subsection{Small \texorpdfstring{$\ell_\infty$}{Linf} and \texorpdfstring{$\ell_1$}{L1} Perturbations are Mutually Exclusive}
\label{ssec:tradeoff_lp}

The starting point of our analysis is the observation that the robustness of a feature depends on the considered perturbation type. To illustrate, we recall two classifiers from~\cite{tsipras2019robustness} that operate on disjoint feature sets. The first, $f(\vec{x}) = \sign(x_0)$, achieves accuracy $p_0$ for all $\ell_\infty$-perturbations with $\epsilon < 1$ but is highly vulnerable to $\ell_1$-perturbations of size $\epsilon \geq 1$. The second classifier, $h(\vec{x}) = \sign(\sum_{i=1}^d x_i)$ is robust to $\ell_1$-perturbations of average norm below $\Exp[\sum_{i=1}^d x_i] =\Theta(\sqrt{d})$, yet it is fully subverted by a $\ell_\infty$-perturbation that shifts the features $x_1, \dots, x_d$ by $\pm2\eta = \Theta(1/\sqrt{d})$.
%
%For instance, $x_0$ is a strong feature for $\ell_\infty$-perturbations, but a very weak feature for $\ell_1$-perturbations (an input's average $\ell_1$-norm, $\Exp[\norm{\vec{x}}_1] = O(d)$, is much larger than $|x_0|$). Conversely, the features $x_1, \dots, x_d$ are robust to small $\ell_1$-perturbations, but not to $\ell_\infty$-perturbations. %Thus, this data model exhibits a tension between robustness to different $\ell_p$-perturbations.
%\paragraph{Robustness to one perturbation type is easy.}
%
%\paragraph{Robustness to $\ell_1$ and $\ell_\infty$-perturbations is impossible.}
%
%The above classifiers $f$ and $h$ are robust to either $\ell_\infty$-bounded or $\ell_1$-bounded perturbations, but not against both. An $\ell_1$ perturbation with $\epsilon > 1$ suffices to reduce the accuracy of $f$ to $0\%$.
%Conversely,  a weak $\ell_\infty$-perturbation that shifts the features $x_1, \dots, x_2$ by $\pm2\eta$ fully subverts $h$.
We prove that this tension between $\ell_\infty$ and $\ell_1$ robustness, and of the choice of ``robust'' features, is inherent for this task:
\begin{theorem}%[Robustness trade-off between $\ell_\infty$ and $\ell_1$-norms]
	\label{thm:linf_l1}
	Let $f$ be a classifier for $\calD$. Let $S_{\infty}$ be the set of $\ell_\infty$-bounded perturbations with $\epsilon = 2\eta$, and $S_{1}$ the set of $\ell_1$-bounded perturbations with $\epsilon=2$. Then, 
	$
	\calR_{\text{adv}}^{\text{avg}}(f; S_{\infty}, S_{1})  \geq 1/2 \;.
	$
\end{theorem}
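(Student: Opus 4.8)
The plan is to prove the equivalent bound $\calR_{\text{adv}}(f; S_\infty) + \calR_{\text{adv}}(f; S_1) \ge 1$ and divide by $2$. The structural point is that the two adversaries act on complementary coordinate blocks. Write $\vec x = (x_0, \vec x_{1:d})$ with $\vec x_{1:d} := (x_1,\dots,x_d)$. An $\ell_1$-perturbation of norm $2$ can be spent entirely on the first coordinate, and since $x_0 \in \{-1,+1\}$ this lets the adversary set $x_0$ to either $+1$ or $-1$ (cost $0$ or $2$), leaving $\vec x_{1:d}$ fixed; an $\ell_\infty$-perturbation of norm $2\eta$ can subtract $2\eta y$ from each of $x_1,\dots,x_d$, turning the conditional law of $\vec x_{1:d}$ from $\calN(y\eta,1)^{\otimes d}$ into $\calN(-y\eta,1)^{\otimes d}$, while it cannot change the sign of $x_0$. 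I will tie the two bounds together through the single quantity
\[
\pi \ :=\ \Pr_{(\vec x,y)\sim\calD}\!\big[\,f(1,\vec x_{1:d}) = y \ \text{ and }\ f(-1,\vec x_{1:d}) = y\,\big],
\]
showing $\calR_{\text{adv}}(f; S_1) \ge 1 - \pi$ and $\calR_{\text{adv}}(f; S_\infty) \ge \pi$.

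For $S_1$: for every $(\vec x,y)$ both vectors $(b,\vec x_{1:d})$, $b\in\{-1,+1\}$, are admissible perturbations of $\vec x$, so $\max_{\vec r\in S_1}\mathbbm{1}[f(\vec x+\vec r)\neq y]\ \ge\ \max_{b\in\{-1,1\}}\mathbbm{1}[f(b,\vec x_{1:d})\neq y]$, and the latter equals $1$ unless $f(1,\vec x_{1:d})=f(-1,\vec x_{1:d})=y$; taking expectations gives $\calR_{\text{adv}}(f;S_1)\ge 1-\pi$. For $S_\infty$: using the admissible perturbation $(0,-2\eta y,\dots,-2\eta y)$ gives $\calR_{\text{adv}}(f;S_\infty)\ge \Pr_{(\vec x,y)}[f(x_0,\vec w)\neq y]$ with $\vec w := \vec x_{1:d}-(2\eta y,\dots,2\eta y)$. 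Conditioning on $y$ and $\vec w$: since $x_0$ is independent of $\vec x_{1:d}$ given $y$ and takes the values $\pm 1$ with probabilities $p_0, 1-p_0$, the conditional probability of error is a convex combination of $\mathbbm{1}[f(1,\vec w)\neq y]$ and $\mathbbm{1}[f(-1,\vec w)\neq y]$, hence at least their minimum, i.e. at least $\mathbbm{1}[f(1,\vec w)= -y\ \text{and}\ f(-1,\vec w)=-y]$ (using that $f$ is binary). Since $\vec w \sim \calN(-y\eta,1)^{\otimes d}$, taking expectations and substituting $y\mapsto -y$ identifies this lower bound with $\pi$ exactly. Adding, $\calR_{\text{adv}}(f;S_\infty)+\calR_{\text{adv}}(f;S_1)\ge\pi+(1-\pi)=1$, so $\calR_{\text{adv}}^{\text{avg}}(f;S_\infty,S_1)\ge 1/2$.

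The one genuinely delicate step is the $S_\infty$ bound: because the $\ell_\infty$-adversary cannot flip $x_0$, one must check that the adversary-uncontrolled randomness of $x_0$ does no harm, and this is precisely what the ``convex combination $\ge$ minimum'' estimate handles — it is also the reason $\pi$ is defined through the conjunction $f(1,\cdot)=f(-1,\cdot)=y$ rather than a single evaluation of $f$. Everything else is routine: the admissibility of the two explicit perturbations, the mean-shift computation $\calN(y\eta,1)\to\calN(-y\eta,1)$, and the final arithmetic; notably the argument needs no largeness assumption on $d$, since $\epsilon = 2$ already suffices to flip $x_0$ and the $\ell_\infty$ estimate never uses $2\eta < 1$.
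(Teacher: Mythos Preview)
Your proof is correct, but it takes a somewhat different route from the paper's.

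The paper's argument is more symmetric: it fixes a \emph{single} deterministic $\ell_1$-perturbation $\vec r_1=(-2x_0,0,\dots,0)$ (which sends $x_0\mapsto -x_0$, hence $\calG^{y}\to\calG^{-y}$) and the single $\ell_\infty$-perturbation $\vec r_\infty=(0,-2y\eta,\dots,-2y\eta)$ (which sends $\calH^{y}\to\calH^{-y}$). Writing $p_{+-}=\Pr_{(\calG^{+1},\calH^{-1})}[f=+1]$ and $p_{-+}=\Pr_{(\calG^{-1},\calH^{+1})}[f=+1]$, one gets the exact identities $\Pr[f(\vec x+\vec r_\infty)=y]=\tfrac12(1+p_{+-}-p_{-+})$ and $\Pr[f(\vec x+\vec r_1)=y]=\tfrac12(1+p_{-+}-p_{+-})$, whose sum is identically $1$. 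No auxiliary quantity, no inequality beyond ``specific attack $\le$ worst attack''.

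Your approach instead lets the $\ell_1$-adversary pick the \emph{better} of $x_0\to 1$ and $x_0\to -1$, which yields the stronger bound $\calR_{\text{adv}}(f;S_1)\ge 1-\pi$ with $\pi$ the ``both-signs-correct'' probability; the price is that on the $\ell_\infty$ side the adversary cannot touch $x_0$, so you must average over the (uncontrolled) randomness of $x_0$ and invoke the convex-combination-$\ge$-minimum estimate to recover exactly $\pi$ after the label swap $y\mapsto -y$. That step is sound (and you flagged it as the delicate point). What the paper's symmetry buys is avoiding this detour entirely: by negating $x_0$ rather than choosing its value, the $\ell_1$-perturbed input with label $y$ has the \emph{same} law as the $\ell_\infty$-perturbed input with label $-y$, so the two accuracies are forced to sum to $1$ by construction. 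Your argument is slightly longer but arguably more robust, since it never relies on $x_0\mapsto -x_0$ giving exactly the opposite-label marginal.
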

\vspace{-0.25em}
The proof is in Appendix~\ref{apx:proof_l1_linf}. The bound shows that no classifier can attain better $\calR_{\text{adv}}^{\text{avg}}$ (and thus $\calR_{\text{adv}}^{\text{max}}$) than a trivial constant classifier $f(x)=1$, which satisfies $\calR_{\text{adv}}(f; S_{\infty}) = \calR_{\text{adv}}(f; S_{1})  = 1/2$.

Similar to~\cite{demontis2016security}, our analysis extends to arbitrary dual norms $\ell_p$ and $\ell_q$ with $1/p + 1/q = 1$ and $p < 2$. The perturbation required to flip the features $x_1, \dots, x_n$ has an $\ell_p$ norm of $\Theta(d^{\frac1p - \frac12}) = \omega(1)$ and an $\ell_q$ norm of $\Theta(d^{\frac1q - \frac12}) = \Theta(d^{\frac12 - \frac1p}) =o(1)$. Thus, feature $x_0$ is more robust than features $x_1, \dots, x_n$ with respect to the $\ell_q$-norm, whereas for the dual $\ell_p$-norm the situation is reversed.

%\paragraph{Robustness to $\ell_2$  perturbations.}
%The above data distribution exhibits a stark divide between robustness to $\ell_\infty$ and $\ell_1$ perturbations, where strong features in one attack model correspond to weak features in the other. What about an $\ell_2$ adversary? With a slight generalization of the distribution $\calD$, we show in Appendix~\ref{apx:l2} that $\ell_2$ adversaries can effectively interpolate between the two types of features, while maintaining the tension between $\ell_1$ and $\ell_\infty$ adversaries.

%\paragraph{Empirical results with feature selection.}
%\todo{do we really need this?}

\subsection{Small \texorpdfstring{$\ell_\infty$}{Linf} and Spatial Perturbations are (nearly) Mutually Exclusive}
\label{ssec:tradeoff_RT}

We now analyze two other orthogonal perturbation types, $\ell_\infty$-noise and rotation-translations~\cite{engstrom2017rotation}. In some cases, increasing robustness to $\ell_\infty$-noise has been shown to decrease robustness to rotation-translations~\cite{engstrom2017rotation}.
We prove that such a trade-off is inherent for our binary classification task.

To reason about rotation-translations, we assume that the features $x_i$ form a $2$D grid. We also let $x_0$ be distributed as $\calN(y, \alpha^{-2})$, a technicality that does not qualitatively change our prior results.
Note that the distribution of the features $x_1, \dots, x_d$ is permutation-invariant. Thus, the only power of a rotation-translation adversary is to ``move'' feature $x_0$. Without loss of generality, we identify a small rotation-translation of an input $\vec{x}$ with a permutation of its features that sends $x_0$ to one of $N$ fixed positions (e.g., with translations of $\pm 3$px as in~\cite{engstrom2017rotation}, $x_0$ can be moved to $N=49$ different positions). 

%Without loss of generality, we identify rotations or translations of an input $\vec{x}$ with a fixed set of permutations of its feature indices. \footnote{Assuming that no aliasing occurs, an image rotation is a (structured) permutation of its pixels. For translations, we can assume for simplicity that $x_0$ is not near an image border, and that features ``cropped out'' by the translation are replaced by identically distributed features. A translation is then also a feature permutation.} More formally, our rotation-translation adversary picks a permutation from some fixed set of permutations over $[0, d]$. We constrain the strength of this adversary by limiting the number of positions, denoted $N$, that feature $x_0$ can be sent to (e.g., with translations of $\pm 3$ pixels as in~\cite{engstrom2017rotation}, each feature can be moved to one of $N=49$ different positions, including its original position). 

A model can be robust to these permutations by ignoring the $N$ positions that feature $x_0$ can be moved to, and focusing on the remaining permutation-invariant features. Yet, this model is vulnerable to $\ell_\infty$-noise, as it ignores $x_0$.  In turn, a model that relies on feature $x_0$ can be robust to $\ell_\infty$-perturbations, but is vulnerable to a spatial perturbation that ``hides'' $x_0$ among other features. Formally, we show:
\begin{theorem}%[Robustness trade-off between $\ell_\infty$ and spatial perturbations]
\label{thm:linf_RT}
	Let $f$ be a classifier for $\calD$ (with $x_0 \sim \calN(y, \alpha^{-2})$). Let $S_{\infty}$ be the set of $\ell_\infty$-bounded perturbations with $\epsilon = 2\eta$, and $S_{\text{RT}}$ be the set of perturbations for an RT adversary with budget $N$. Then, 
	$
	\calR_{\text{adv}}^{\text{avg}}(f; S_{\infty}, S_{\text{RT}})  \geq 1/2 - O({1}/{\sqrt{N}}) \;.
	$
\end{theorem}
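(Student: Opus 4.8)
The plan is to mimic the proof of Theorem~\ref{thm:linf_l1}: bound $\calR_{\text{adv}}^{\text{avg}}$ from below by the error of $f$ against \emph{one fixed} adversary strategy for each of $S_\infty$ and $S_{\text{RT}}$, and show the two ``attacked'' experiments jointly force error at least $\tfrac12-O(1/\sqrt N)$. For $S_\infty$ I would reuse the ``flip'' strategy: shift every permutation-invariant coordinate $x_1,\dots,x_d$ by $-2y\eta$ (an $\ell_\infty$ move of size exactly $\epsilon=2\eta$), which turns a class-$y$ sample into a point whose coordinates $x_1,\dots,x_d$ are distributed exactly as for class $-y$ while leaving $x_0$ untouched (the feasible shift of $x_0$ is only $2\eta=o(1)$, negligible for the $\Theta(1)$-scale $x_0$). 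For $S_{\text{RT}}$ I would relocate $x_0$ to a \emph{uniformly random} one of the $N$ admissible positions, so that position $0$ becomes an ordinary decoy.

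The key new ingredient is a ``hiding'' estimate: a single $\calN(y,\alpha^{-2})$ coordinate planted at a uniformly random one of $N$ positions, among $N-1$ i.i.d.\ $\calN(y\eta,1)$ decoys, has $\chi^2$-divergence exactly $\tfrac1N\,\chi^2\big(\calN(y,\alpha^{-2})\,\|\,\calN(y\eta,1)\big)$ from $N$ i.i.d.\ decoys — a one-line second-moment computation, since the likelihood ratio is the average of $N$ i.i.d.\ terms of mean $1$. Because $\chi^2\big(\calN(y,\alpha^{-2})\,\|\,\calN(0,1)\big)$ is finite precisely when $\alpha>1/\sqrt2$ (this is why $x_0$ is taken Gaussian rather than $\pm1$-valued: a point mass has infinite $\chi^2$ against a Gaussian), this divergence is $O(1/N)$ and the total variation is $O(1/\sqrt N)$. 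Hence the RT-relocation experiment for class $y$ lies within $O(1/\sqrt N)$ in total variation of the ``stripped'' distribution $\calD'$ in which $x_0$ is itself an ordinary decoy (all coordinates i.i.d.\ $\calN(y\eta,1)$), yielding $\calR_{\text{adv}}(f;S_{\text{RT}})\ge \calR_{\text{clean}}(f;\calD')-O(1/\sqrt N)$.

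On $\calD'$ the flip strategy maps class $y$ exactly onto class $-y$, so for \emph{every} classifier $g$ the standard--robust tradeoff holds with no slack: $\calR_{\text{adv}}(g;S_\infty)$ computed over $\calD'$ is at least $\mathrm{acc}(g;\calD')=1-\calR_{\text{clean}}(g;\calD')$. Combining with the RT bound would finish the proof \emph{if} one could replace $\calD$ by $\calD'$ in the $\ell_\infty$ term — and here lies the main obstacle. The true distribution $\calD$ carries the robust feature $x_0$ at position $0$, which the $\ell_\infty$ adversary provably cannot erase, so the crude bound $|\calR_{\text{adv}}(f;S_\infty;\calD)-\calR_{\text{adv}}(f;S_\infty;\calD')|\le d_{\mathrm{TV}}(\calD,\calD')=d_{\mathrm{TV}}\big(\calN(y,\alpha^{-2}),\calN(y\eta,1)\big)$ costs a \emph{constant}, not $O(1/\sqrt N)$. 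The resolution is that robustness to $S_{\text{RT}}$ already forces $f$ to be nearly insensitive to position $0$: the RT adversary can overwrite position $0$ with a decoy by relocating $x_0$, so by the same $\chi^2$ estimate any dependence of $f$ on position $0$ beyond $\Theta(1/\sqrt N)$ must itself inflate $\calR_{\text{adv}}(f;S_{\text{RT}})$. Concretely I would argue via a dichotomy on a quantity $\rho$ measuring how much $f$'s output distribution moves when position $0$ is swapped between the honest $x_0$ and a decoy: if $\rho$ exceeds $\Theta(1/\sqrt N)$ then $\calR_{\text{adv}}(f;S_{\text{RT}})$ already beats $\calR_{\text{clean}}(f;\calD')$ by $\Omega(1/\sqrt N)$ and we are done; otherwise $\calD$ and $\calD'$ are interchangeable up to $O(\rho)=O(1/\sqrt N)$ inside the $\ell_\infty$ bound. (Equivalently, one can fold the $\calD\!\to\!\calD'$ passage into the same single comparison against $\calD'$, so the position-$0$ discrepancy enters only through that averaged likelihood ratio whose second moment is $O(1/N)$.) Assembling the pieces gives $\calR_{\text{adv}}^{\text{avg}}(f;S_\infty,S_{\text{RT}})=\tfrac12\big(\calR_{\text{adv}}(f;S_\infty)+\calR_{\text{adv}}(f;S_{\text{RT}})\big)\ge\tfrac12\big((1-\calR_{\text{clean}}(f;\calD'))+\calR_{\text{clean}}(f;\calD')\big)-O(1/\sqrt N)=\tfrac12-O(1/\sqrt N)$, with the hidden constant depending on $\alpha$ through $\chi^2(\calN(y,\alpha^{-2})\,\|\,\calN(0,1))$.

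The $\chi^2$/total-variation computations, the two explicit strategies, and the reduction of $\calR_{\text{adv}}^{\text{avg}}$ to the two single-type bounds are routine; the delicate point — which I expect to occupy most of the work — is transferring the consequence ``$f$ cannot profitably use $x_0$'' from $S_{\text{RT}}$-robustness into the $S_\infty$ estimate without incurring a constant-order loss, which is exactly what the $O(1/\sqrt N)$ in the statement (rather than the sharp $1/2$ of Theorem~\ref{thm:linf_l1}) reflects.
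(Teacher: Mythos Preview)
Your $\chi^2$ computation for the ``hiding'' bound is genuinely cleaner than what the paper does: the paper proves the same $O(1/\sqrt N)$ total-variation estimate via Berry--Esseen applied to the log-likelihood ratio, which is heavier machinery and implicitly requires a stricter moment condition on $\alpha$. Your second-moment argument for the mixture is the right tool here and could be dropped straight into the paper's proof.

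Where your plan breaks down is the combination step. You correctly identify the obstacle --- the $\ell_\infty$ bound lives on $\calD$ while your RT bound only reaches the stripped distribution $\calD'=\calZ^y$, and the two differ by a constant at position~$0$ --- but your dichotomy on $\rho$ is not a proof. Concretely, take $f(\vec x)=\sign(x_0)$: your random-relocation attack only certifies $\calR_{\text{adv}}(f;S_{\text{RT}})\gtrsim \calR_{\text{clean}}(f;\calD')\approx 1/2$, while $\calR_{\text{adv}}(f;S_\infty;\calD)\approx 0$, so your two inequalities sum to $\approx 1/2$, not $1-O(1/\sqrt N)$. The true RT risk for this $f$ is $\approx 1$ (the adversary picks the \emph{worst} swap, not a random one), so the theorem holds --- but nothing in your argument captures that extra $1/2$. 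Your claim ``if $\rho$ is large then $\calR_{\text{adv}}(f;S_{\text{RT}})$ beats $\calR_{\text{clean}}(f;\calD')$ by $\Omega(\rho)$'' is exactly what is needed, but you give no mechanism for it, and the averaged likelihood ratio you invoke bounds only the \emph{random} relocation, which is already shown to be too weak here.

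The paper sidesteps this entirely with one extra idea you are missing: since the random relocation $\calA$ sends \emph{both} $(\calG^{y},\calH^{y})$ and $(\calG^{-y},\calH^{y})$ within $O(1/\sqrt N)$ of $\calZ^y$, one can build an explicit \emph{inverse} RT adversary $\calA^{-1}$ that maps $\calZ^y$ back to $(\calG^{-y},\calH^{y})$ (by resampling the permutation according to the posterior under the target law). The composition $\calA^*=\calA^{-1}\circ\calA$ is still an RT adversary with budget $N$ and now sends $(\calG^y,\calH^y)$ within $O(1/\sqrt N)$ of $(\calG^{-y},\calH^y)$ --- i.e., it approximately \emph{flips} the label of $x_0$, mirroring exactly what the $\ell_\infty$ attack does to $x_1,\dots,x_d$. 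After that, the combination is the same two-line computation as in Theorem~\ref{thm:linf_l1}, with no dichotomy needed. Your $\chi^2$ bound plugs directly into this argument in place of Berry--Esseen; what you need to add is the $\calA^{-1}$ construction.
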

The proof, given in Appendix~\ref{apx:proof_RT}, is non-trivial and yields an asymptotic lower-bound on $\calR_{\text{adv}}^{\text{avg}}$. We can also provide tight numerical estimates for concrete parameter settings (see Appendix~\ref{apx:remark_RT}).

\subsection{Affine Combinations of Perturbations}
\label{ssec:affine}

We defined $\calR_{\text{adv}}^{\text{max}}$ as the error rate against an adversary that may choose a different perturbation type for each input. If a model were robust to this adversary, what can we say about the robustness to a more creative adversary that \emph{combines} different perturbation types?
To answer this question, we introduce a new adversary that mixes different attacks by linearly interpolating between perturbations. %We show that this adversary is sometimes stronger than one that is limited to choosing between individual perturbation types.

For a perturbation set $S$ and $\beta \in [0,1]$, we denote $\beta\cdot S$ the set of perturbations scaled down by $\beta$. For an $\ell_p$-ball with radius $\epsilon$, this is the ball with radius $\beta\cdot\epsilon$. For rotation-translations, the attack budget $N$ is scaled to $\beta\cdot N$.
For two sets $S_1,S_2$, we define $S_\text{affine}(S_1, S_2)$ as the set of perturbations that compound a perturbation $\vec{r}_1 \in \beta \cdot S_1$ with a perturbation $\vec{r}_2 \in (1-\beta) \cdot S_2$, for any $\beta \in [0, 1]$.

Consider one adversary that chooses, for each input, $\ell_p$ or $\ell_q$-noise from balls $S_p$ and $S_q$, for $p, q > 0$. The affine adversary picks perturbations from the set $S_\text{affine}$ defined as above. We show:
\begin{claim}%[Affine combinations of $\ell_p$-perturbations do not affect linear models]
	\label{thm:linear_affine}
	For a linear classifier $f(\vec{x}) = \sign(\vec{w}^T \vec{x} + b)$, we have $\calR_{\text{adv}}^{\text{max}}(f; S_p, S_q) = \calR_{\text{adv}}(f; S_\text{affine})$.
\end{claim}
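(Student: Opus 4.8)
The plan is to reduce both sides of the claimed identity to one and the same threshold probability, governed by the ``dual-norm reach'' of the perturbation set. Since $f$ is linear, the effect of a perturbation $\vec r$ on the decision is entirely through the scalar $\vec w^T \vec r$. So I would first define, for any perturbation set $S$ that is symmetric ($S = -S$) and contains $\vec 0$, the quantity $g(S) \coloneqq \max_{\vec r \in S} \lvert \vec w^T \vec r \rvert$. A one-line computation with the zero-one loss shows that an adversary restricted to $S$ flips the label on $(\vec x, y)$ exactly when $y(\vec w^T \vec x + b) \le g(S)$: indeed $\max_{\vec r \in S} l(f(\vec x + \vec r), y) = 1$ iff $\min_{\vec r \in S} y(\vec w^T \vec x + b + \vec w^T \vec r) \le 0$, and by symmetry of $S$ this minimum equals $y(\vec w^T \vec x + b) - g(S)$. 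Hence
\[
\calR_{\text{adv}}(f; S) \;=\; \Pr_{(\vec x, y) \sim \calD}\!\left[\, y(\vec w^T \vec x + b) \le g(S) \,\right],
\]
which is monotone non-decreasing in $g(S)$. It therefore suffices to prove $g(S_p \cup S_q) = g(S_\text{affine}(S_p, S_q))$, since then both error rates equal the same probability.

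For the union, $g(S_p \cup S_q) = \max\{g(S_p), g(S_q)\}$ is immediate. For $S_\text{affine}$, observe first that $\beta = 1$ gives $S_p \subseteq S_\text{affine}$ and $\beta = 0$ gives $S_q \subseteq S_\text{affine}$, so $g(S_\text{affine}) \ge \max\{g(S_p), g(S_q)\}$. For the reverse inequality, take any $\vec r = \vec r_1 + \vec r_2$ with $\vec r_1 \in \beta S_p$ and $\vec r_2 \in (1-\beta) S_q$; since scaling an $\ell_p$-ball by $\beta \ge 0$ scales $\max_{\vec s \in S}\lvert \vec w^T \vec s\rvert$ by exactly $\beta$, the triangle inequality gives
\[
\lvert \vec w^T \vec r \rvert \le \lvert \vec w^T \vec r_1 \rvert + \lvert \vec w^T \vec r_2 \rvert \le \beta\, g(S_p) + (1-\beta)\, g(S_q) \le \max\{g(S_p), g(S_q)\}.
\]
Taking the supremum over $\vec r \in S_\text{affine}$ yields $g(S_\text{affine}) \le \max\{g(S_p), g(S_q)\}$, so equality holds. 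Combining with the displayed formula for $\calR_{\text{adv}}$ gives $\calR_{\text{adv}}^{\text{max}}(f; S_p, S_q) = \calR_{\text{adv}}(f; S_p \cup S_q) = \calR_{\text{adv}}(f; S_\text{affine})$.

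The only place requiring care — and where I expect the sole friction — is boundary and degeneracy bookkeeping: one must check that $S_\text{affine}$ is itself symmetric and contains $\vec 0$ so the threshold characterization applies to it (it is: negating both $\vec r_1, \vec r_2$ and taking $\vec r_1 = \vec r_2 = \vec 0$); that the tie event $y(\vec w^T \vec x + b) = g(S)$ has probability zero, which holds whenever $\vec w \neq \vec 0$ because then $\vec w^T \vec x$ has a density (the coordinates $x_1,\dots,x_d$ are Gaussian); and to dispatch the trivial case $\vec w = \vec 0$ separately, where $f$ is constant and all three quantities coincide. Nothing about $\calD$ beyond ``$\vec w^T \vec x$ is atomless unless $\vec w = \vec 0$'' is used, and in fact the identity $g(S_\text{affine}) = \max\{g(S_p), g(S_q)\}$ holds for affine combinations of arbitrary (quasi-)norm balls, not just dual pairs.
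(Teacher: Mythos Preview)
Your proof is correct and follows essentially the same approach as the paper: both reduce to showing that the range of the linear functional $\vec r \mapsto \vec w^T \vec r$ is identical over $S_p \cup S_q$ and over $S_{\text{affine}}$, via the convex-combination/triangle-inequality bound. The only cosmetic difference is that you exploit symmetry of the balls to work with the single quantity $g(S)=\max_{\vec r\in S}|\vec w^T\vec r|$ and the margin $y(\vec w^T\vec x+b)$, whereas the paper keeps $v_{\max}$ and $v_{\min}$ separate and conditions on the label; your packaging is a bit more compact, and your careful bookkeeping about ties and the $\vec w=\vec 0$ case is harmless (indeed unnecessary, since once $g(S_p\cup S_q)=g(S_{\text{affine}})$ the two events are literally the same set).
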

Thus, for linear classifiers, robustness to a union of $\ell_p$-perturbations implies robustness to affine adversaries (this holds for any distribution). The proof, in Appendix~\ref{apx:proof_affine_linear} extends to models that are \emph{locally linear} within balls $S_p$ and $S_q$ around the data points. {For the distribution $\calD$ of Section~\ref{ssec:dist}, we can further show that there are settings (distinct from the one in Theorem~\ref{thm:linf_l1}) where: (1) robustness against a union of $\ell_\infty$ and $\ell_1$-perturbations is possible; (2) this requires the model to be non-linear; (3) yet, robustness to affine adversaries is impossible (see Appendix~\ref{apx:affine_LP_nonlinear} for details).} Our experiments in Section~\ref{sec:experiments} show that neural networks trained on CIFAR10 have a behavior that is consistent with locally-linear models, in that they are as robust to affine adversaries as against a union of $\ell_p$-attacks.

%For our distribution $\calD$, we can actually show that there are settings where non-linearity is necessary to get robustness against $S_\text{U}$, but that the resulting models remain vulnerable to affine combinations of perturbations (the $\ell_1$ and $\ell_\infty$-bounds in the below theorem are incomparable to the ones in Theorem~\ref{thm:linf_l1}, so the two results are not contradictory):
%

In contrast, compounding $\ell_\infty$ and spatial perturbations yields a stronger attack, even for linear models:

\begin{theorem}%[Affine combinations of $\ell_\infty$ and spatial perturbations can affect linear models]
	\label{thm:affine_RT}
Let $f(\vec{x}) = \sign(\vec{w}^T \vec{x} + b)$ be a linear classifier for $\calD$ (with $x_0 \sim \calN(y, \alpha^{-2})$). Let $S_\infty$ be some $\ell_\infty$-ball and $S_{\text{RT}}$ be rotation-translations with budget $N>2$. Define $S_{\text{affine}}$ as above. Assume $w_0 > w_i > 0, \forall i \in[1, d]$. Then $\calR_{\text{adv}}(f; S_{\text{affine}}) >  \calR_{\text{adv}}^{\text{max}}(f; S_\infty, S_{\text{RT}})$. 
\end{theorem}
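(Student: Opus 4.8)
The strategy is to exhibit inputs of positive $\calD$-measure on which some affine perturbation succeeds while no perturbation from $S_\infty\cup S_{\text{RT}}$ does. Taking $\beta=0$ and $\beta=1$ in the affine family shows $S_{\text{RT}}\subseteq S_{\text{affine}}$ and $S_\infty\subseteq S_{\text{affine}}$, so monotonicity of $\calR_{\text{adv}}(f;\cdot)$ in the perturbation set already gives $\calR_{\text{adv}}(f;S_{\text{affine}})\ge\calR_{\text{adv}}^{\text{max}}(f;S_\infty,S_{\text{RT}})$, with the gap equal to the probability that $(\vec x,y)$ is robust to every perturbation in $S_\infty\cup S_{\text{RT}}$ but not to $S_{\text{affine}}$. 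It therefore suffices to show this event has positive probability; I will do so among inputs with label $y=+1$.

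Fix $y=+1$, put $M(\vec x)\coloneqq\vec w^T\vec x+b$ and $W\coloneqq\sum_{i=0}^{d}w_i>0$. Since every $w_i>0$: the optimal $\ell_\infty$-perturbation of radius $\epsilon$ subtracts $\epsilon$ from every coordinate, so $f$ errs under $S_\infty$ iff $M(\vec x)\le\epsilon W$; in the ``relocate $x_0$'' model of Section~\ref{ssec:tradeoff_RT} an RT move affects the margin only through the feature it slides into $x_0$'s slot, so writing $D_m(\vec x)$ for the largest margin decrease obtainable by an RT move admissible under a translation budget $m$ (nondecreasing in $m$, and strictly positive once the budget suffices to move $x_0$ onto a feature smaller than $x_0$, which is possible because $w_0>w_i$), $f$ errs under $S_{\text{RT}}$ iff $M(\vec x)\le D_N(\vec x)$; and, since the $\ell_\infty$-ball is coordinate-symmetric and translation-invariant and all weights stay positive after any permutation, the optimal affine perturbation with parameter $\beta$ first performs the best RT move admissible under budget $(1-\beta)N$ and then subtracts $\beta\epsilon$ from every coordinate, decreasing the margin by \emph{exactly} $D_{(1-\beta)N}(\vec x)+\beta\epsilon W$. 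Hence $f$ errs under $S_{\text{affine}}$ iff $M(\vec x)\le\Phi(\vec x)\coloneqq\max_{\beta\in[0,1]}\big(D_{(1-\beta)N}(\vec x)+\beta\epsilon W\big)$, and $\Phi(\vec x)\ge\max\!\big(\epsilon W,D_N(\vec x)\big)$ by evaluating at $\beta\in\{0,1\}$.

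It remains to find an open set $U\subseteq\R^{d+1}$ of $y=+1$ inputs on which, simultaneously, $\Phi(\vec x)>\max(\epsilon W,D_N(\vec x))$ and $M(\vec x)$ lies strictly between $\max(\epsilon W,D_N(\vec x))$ and $\Phi(\vec x)$; as $\Phi,D_N,M$ are continuous in $\vec x$ and $\calD$ has full support, such a $U$ has positive probability and sits inside $\calR_{\text{adv}}(f;S_{\text{affine}})$ but outside $\calR_{\text{adv}}^{\text{max}}$. The key is the first inequality, and here the hypothesis $N>2$ enters: it leaves slack in the RT budget, so there is a $\beta_0\in(0,1)$ for which the affine adversary can still move $x_0$ while retaining a fraction $\beta_0$ of its budget for $\ell_\infty$-noise. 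Restrict to the (positive-probability) event that the damage-maximizing RT target of $\vec x$ does not require the full budget $N$, so it is still admissible under budget $(1-\beta_0)N$ and $D_{(1-\beta_0)N}(\vec x)=D_N(\vec x)$; then $\Phi(\vec x)\ge D_N(\vec x)+\beta_0\epsilon W>D_N(\vec x)$. Intersecting with the Gaussian-tail event that $x_0$ is large enough that $D_N(\vec x)>(1-\beta_0)\epsilon W$ also yields $\Phi(\vec x)>\epsilon W$. Finally, the remaining coordinates $x_1,\dots,x_d$ still range freely, so one can tune them (staying in the above event) to place $M(\vec x)$ in the positive-length interval $\big(\max(\epsilon W,D_N(\vec x)),\,D_N(\vec x)+\beta_0\epsilon W\big]$, i.e.\ exactly among the inputs robust to $S_\infty\cup S_{\text{RT}}$ but not to $S_{\text{affine}}$.

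The main obstacle is the decomposition claim for the linear classifier: that the optimal affine attack genuinely splits as ``best admissible RT relocation, then uniform $\ell_\infty$ shift'' with the two margin contributions adding. This is where $w_0>w_i>0$ and the geometry of the rotation-translation model are used, and it is what forces $\Phi$ to strictly exceed $\max(\epsilon W,D_N)$ once the RT budget has slack. A secondary technical point is modelling the fractional budget $(1-\beta)N$ and checking that the damage-maximizing relocation survives a small budget reduction; this is precisely why the theorem needs $N>2$ (at $N=2$ the only non-trivial relocation consumes the whole budget, so $\Phi=\max(\epsilon W,D_N)$ and equality holds).
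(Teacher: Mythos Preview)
Your proposal is correct and rests on the same core observation as the paper: for a linear $f$ with all weights positive, the affine attack's margin drop decomposes additively into its RT and $\ell_\infty$ contributions, and $N>2$ leaves enough slack that the maximal RT damage can already be realized with budget strictly below $N$, freeing a positive fraction of the $\ell_\infty$ budget. The paper executes this by exhibiting one explicit point---arranging $x_0,\dots,x_d$ so that $M(\vec{x})=1.1\,\epsilon\|\vec{w}\|_1$ and the best swap (with the index of smallest weight among the reachable positions) drops the margin by exactly $\epsilon\|\vec{w}\|_1$, then checking the three inequalities by hand with $\beta=2/N$---whereas you package the same idea abstractly via the functions $D_m$ and $\Phi$ and appeal to continuity plus the full support of $\calD$. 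Your route is a touch more general and makes the role of $N>2$ very clean; the paper's explicit construction is easier to verify line by line.

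One small slip to fix: your events constraining the ``damage-maximizing RT target'' and $D_N(\vec{x})>(1-\beta_0)\epsilon W$ involve $x_0,\dots,x_{N-1}$, not just $x_0$, so the coordinates that ``still range freely'' to place $M(\vec{x})$ in the target interval are $x_N,\dots,x_d$ rather than $x_1,\dots,x_d$. Since those remaining weights are positive this still lets $M(\vec{x})$ hit any value, so the argument is unaffected.
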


This result (the proof is in Appendix~\ref{apx:proof_affine_RT}) draws a distinction between the strength of affine combinations of $\ell_p$-noise, and combinations of $\ell_\infty$ and spatial perturbations. It also shows that robustness to a union of perturbations can be insufficient against a more creative affine adversary. 
These results are consistent with behavior we observe in models trained on real data (see Section~\ref{sec:experiments}).

\section{New Attacks and Adversarial Training Schemes}
\label{sec:adv-training}

We complement our theoretical results with empirical evaluations of the robustness trade-off on MNIST and CIFAR10. To this end, we first introduce new adversarial training schemes tailored to the multi-perturbation risks defined in Equation~\eqref{eq:robustness_measures}, as well as a novel attack for the $\ell_1$-norm.

\paragraph{Multi-perturbation adversarial training.}
%\label{sec:adv-training}
Let
\[\hat{\calR}_{\text{adv}}(f; S)=\sum_{i=1}^m \max_{\vec{r} \in S} L(f(\vec{x}^{(i)} + \vec{r}), y^{(i)})\;,\]
bet the empirical adversarial risk, where $L$ is the training loss and $D$ is the training set.
For a single perturbation type, $\hat{\calR}_{\text{adv}}$ can be minimized with \emph{adversarial training}~\cite{madry2018towards}: the maximal loss is approximated by an attack procedure $\calA(\vec{x})$, such that $\max_{\vec{r} \in S} L(f(\vec{x}+ \vec{r}), y) \approx L(f(\calA(\vec{x})), y)$.

For $i \in [1,d]$, let $\calA_i$ be an attack for the perturbation set $S_i$. The two multi-attack robustness metrics introduced in Equation~\eqref{eq:robustness_measures}  immediately yield the following natural adversarial training strategies:

\begin{myenumerate}
	\item \textbf{``Max'' strategy: } For each input $\vec{x}$, we train on the strongest adversarial example from all attacks, i.e., the $\max$ in $\hat{\calR}_{\text{adv}}$ is replaced by $L(f(\calA_{k^*}(\vec{x})), y)$, for $k^* = \argmax_{k} L(f(\calA_k(\vec{x})), y)$.
	
	%This approach has two drawbacks. First, it is wasteful, as only one of the computed adversarial examples is used. Second, it can overfit to attacks with low \emph{adversarial generalization gap}~\cite{}: suppose a model has $60\%$ train and test accuracy under attack $\calA_1$, and $70\% / 40\%$ train and test accuracy under $\calA_2$. Then, this strategy would only train on samples from $\calA_1$, despite $\calA_2$ being stronger.
	
	\item \textbf{``Avg'' strategy:} This strategy simultaneously trains on adversarial examples from all attacks. That is, the $\max$ in $\hat{\calR}_{\text{adv}}$ is replaced by $\frac1n\sum_{i=1}^n L(f(\calA_i(\vec{x}), y))$. 
\end{myenumerate}

\paragraph{The sparse \texorpdfstring{$\ell_1$}{L1}-descent attack (SLIDE).}
%\label{ssec:l1-attack}
	\begin{figure}[t]
		\centering
		\small
		\begin{minipage}{\linewidth}
			\begin{algorithm}[H]
				\SetAlgoLined
				\DontPrintSemicolon
				\KwIn{Input $\vec{x} \in [0, 1]^d$, steps $k$, step-size $\gamma$, percentile $q$, $\ell_1$-bound $\epsilon$}
				\KwOut{$\hat{\vec{x}} = \vec{x}+\vec{r}$ s.t. $\norm{\vec{r}}_1 \leq \epsilon$}
				\vspace{0.3em}\hrule\vspace{0.3em}
				$\vec{r} \gets \mathbf{0}^d$\;
				\For{$1 \leq i \leq k$}{
					$\vec{g} \gets \nabla_\vec{r} L(\theta, \vec{x} + \vec{r}, y)$\;
					%$e_i \gets \begin{cases} \sign(g_i) & \text{if } |g_i| \geq P_q(|\vec{g}|) \\ 0 & \text{otherwise} \end{cases}$\;
					$e_i = \sign(g_i) \text{ if } |g_i| \geq P_q(|\vec{g}|), \text{ else } 0$\;
					$\vec{r} \gets \vec{r} + \gamma \cdot {\vec{e}}/{\norm{\vec{e}}_1}$\;
					$\vec{r} \gets \Pi_{S_{1}^{\epsilon}}(\vec{r})$\;
				}
				\vspace{0.3em}\hrule\vspace{0.3em}
				\caption{\textbf{The Sparse $\ell_1$ Descent Attack (SLIDE).} $P_q(|\vec{g}|)$ denotes the $q$\textsuperscript{th} percentile of $|\vec{g}|$ and $\Pi_{S_{1}^{\epsilon}}$ is the projection onto the $\ell_1$-ball (see~\cite{duchi2008efficient}).}
				\label{alg:pgd}
			\end{algorithm}
		\end{minipage}
\end{figure}

Adversarial training is contingent on a \emph{strong} and \emph{efficient} attack. Training on weak attacks gives no robustness~\cite{tramer2018ensemble}, while strong optimization attacks (e.g.,~\cite{carlini2016towards, chen2018ead}) are prohibitively expensive. Projected Gradient Descent (PGD)~\cite{kurakin2016scale, madry2018towards}
is a popular choice of attack that is both efficient and produces strong perturbations. To complement our formal results, we want to train models on $\ell_1$-perturbations. Yet, we show that the $\ell_1$-version of PGD is highly inefficient, and propose a better approach suitable for adversarial training.

%Projected Gradient Descent (PGD)~\cite{kurakin2016scale, madry2018towards} is a \emph{steepest descent} method (see~\cite{madry2018tutorial}). For an $\ell_p$ attack, where $S_{p}^{\epsilon}$ is the $\ell_p$-ball of radius $\epsilon$, each PGD iteration updates the perturbation $\vec{r}$ as
%
%\begin{equation*}
%\vec{r} '= \Pi_{S_{p}^{\epsilon}}\Big(\vec{r} + \gamma \cdot \argmax_{\norm{\vec{v}}_p \leq 1} \vec{v}^T \vec{g}\Big),\text{where } \vec{g} = \nabla_{\vec{r}} L(\theta, \vec{x} + \vec{r}, y), \Pi_{S}(\vec{r}) = \argmin_{\vec{r} \in S} \norm{\vec{r}' -\vec{r}}_2 \;.
%\end{equation*}
%
%That is, we take steps (with step-size $\gamma$) in the \emph{steepest descent direction} under the chosen $\ell_p$-norm, and then project the perturbation onto the closest point (under the Eulcidean metric) in the $\ell_p$-ball. For the $\ell_\infty$-norm, the steepest descent is $\texttt{sign}(\vec{g})$ and the projection clips $\vec{r}$ to the range $[-\epsilon, \epsilon]$. For $\ell_2$, the steepest descent is $\vec{g}/{\norm{\vec{g}}_2}$ and the projection maps an $\vec{r}$ outside of $S_{2}^{\epsilon}$ to $\epsilon \cdot {\vec{r}}/{\norm{\vec{r}}_2}$. 

PGD is a \emph{steepest descent} algorithm~\cite{madry2018tutorial}. In each iteration, the perturbation is updated in the steepest descent direction $\argmax_{\norm{\vec{v}} \leq 1} \vec{v}^T \vec{g}$, where $\vec{g}$ is the gradient of the loss. 
For the $\ell_\infty$-norm, the steepest descent direction is $\texttt{sign}(\vec{g})$~\cite{goodfellow2014explaining}, and for $\ell_2$, it is $\vec{g}/{\norm{\vec{g}}_2}$. For the $\ell_1$-norm, the steepest descent direction is the unit vector $\vec{e}$ with $e_{i^*} = \sign(g_{i^*})$, for $i^* = \argmax_i |g_i|$.

This yields an inefficient attack, as each iteration updates a single index of the perturbation $\vec{r}$. We thus design a new attack with finer control over the sparsity of an update step.
For $q \in [0,1]$, let $P_q(|\vec{g}|)$ be the $q$\textsuperscript{th} \emph{percentile} of $|\vec{g}|$.
We set $e_i = \sign(g_i)$ if $|g_i| \geq P_q(|\vec{g}|)$ and $0$ otherwise, and normalize $\vec{e}$ to unit $\ell_1$-norm. For $q \gg 1/d$, we thus update many indices of $\vec{r}$ at once. We introduce another optimization to handle clipping, by ignoring gradient components where the update step cannot make progress (i.e., where $x_i + r_i \in \{0,1\}$ and $g_i$ points outside the domain).
To project $\vec{r}$ onto an $\ell_1$-ball, we use an algorithm of Duchi et al.~\cite{duchi2008efficient}.
%(Duchi et al.~also give a $O(d)$ algorithm, but we prefer the $O(d \log{d})$ variant in practice as it allows for a more efficient batched implementation.) 
Algorithm~\ref{alg:pgd} describes our attack. It outperforms the steepest descent attack as well as a recently proposed Frank-Wolfe algorithm for $\ell_1$-attacks~\cite{kang2019transfer} (see Appendix~\ref{apx:pgd-l1}). Our attack is competitive with the more expensive EAD attack~\cite{chen2018ead} (see Appendix~\ref{apx:results}).

\section{Experiments}
\label{sec:experiments}

We use our new adversarial training schemes to measure the robustness trade-off on MNIST and CIFAR10.%
\footnote{Kang et al.~\cite{kang2019transfer} recently studied the transfer between $\ell_\infty, \ell_1$ and $\ell_2$-attacks for adversarially trained models on ImageNet. They show that models trained on one type of perturbation are not robust to others, but they do not attempt to train models against multiple attacks simultaneously.}
MNIST is an interesting case-study as \emph{distinct} models achieve strong robustness to different $\ell_p$ and spatial attacks\cite{schott2018towards, engstrom2017rotation}. Despite the dataset's simplicity, we show that no single model achieves strong $\ell_\infty, \ell_1$ and $\ell_2$ robustness, and that new techniques are required to close this gap. 
The code used for all of our experiments can be found here: \url{https://github.com/ftramer/MultiRobustness}

\paragraph{Training and evaluation setup.}
We first use adversarial training to train models on a single perturbation type. For MNIST,  we use $\ell_1 (\epsilon=10)$, $\ell_2 (\epsilon=2)$ and $\ell_\infty (\epsilon=0.3)$. For CIFAR10 we use $\ell_\infty (\epsilon=\frac{4}{255})$ and $\ell_1 (\epsilon=\frac{2000}{255})$. We also train on rotation-translation attacks with $\pm3$px translations and $\pm30\degree$ rotations as in~\cite{engstrom2017rotation}. We denote these models Adv$_1$, Adv$_2$, Adv$_\infty$, and Adv$_{\text{RT}}$.  
We then use the ``max'' and ``avg'' strategies from Section~\ref{sec:adv-training} to train models Adv$_\text{max}$ and  Adv$_\text{avg}$ against multiple perturbations. We train once on all $\ell_p$-perturbations, and once on both $\ell_\infty$ and RT perturbations.
We use the same CNN (for MNIST) and wide ResNet model (for CIFAR10) as Madry et al.~\cite{madry2018towards}. Appendix~\ref{apx:setup} has more details on the training setup, and attack and training hyper-parameters.

We evaluate robustness of all models using multiple attacks: (1) we use \emph{gradient-based attacks} for all $\ell_p$-norms, i.e., PGD~\cite{madry2018towards} and our SLIDE attack with $100$ steps and $40$ restarts ($20$ restarts on CIFAR10), as well as Carlini and Wagner's $\ell_2$-attack~\cite{carlini2016towards} (C\&W), and an $\ell_1$-variant---EAD~\cite{chen2018ead}; (2) to detect gradient-masking, we use \emph{decision-based attacks}: the Boundary Attack~\cite{brendel2018decision} for $\ell_2$, the Pointwise Attack~\cite{schott2018towards} for $\ell_1$, and the Boundary Attack++~\cite{chen2019boundary} for $\ell_\infty$; (3) for spatial attacks, we use the {optimal attack} of~\cite{engstrom2017rotation} that enumerates all small rotations and translations. For unbounded attacks (C\&W, EAD and decision-based attacks), we discard perturbations outside the $\ell_p$-ball.

%As we will see, evaluating against this wide-range of attacks proves useful in uncovering pernicious gradient-masking phenomena. We will also showcase the viability of our novel $\ell_1$-PGD attack---compared to more expensive gradient-based attacks such as EAD.

For each model, we report accuracy on $1000$ test points for: (1) individual perturbation types; (2) the union of these types, i.e., $1-\calR_{\text{adv}}^{\text{max}}$; and (3) the average of all perturbation types, $1-\calR_{\text{adv}}^{\text{avg}}$.
We briefly discuss the optimal error that can be achieved if there is no robustness trade-off. For perturbation sets $S_1, \dots S_n$, let $\calR_1, \dots, \calR_n$ be the optimal risks achieved by distinct models. Then, a single model can at best achieve risk $\calR_i$ for each $S_i$, i.e., $\OPT(\calR_{\text{adv}}^{\text{avg}}) = \frac{1}{n}\sum_{i=1}^n\calR_i$. 
If the errors are fully correlated, so that a maximal number of inputs admit \emph{no} attack, we have $\OPT(\calR_{\text{adv}}^{\text{max}}) = \max \{\calR_1, \dots, \calR_n\}$. Our experiments show that these optimal error rates are not achieved.

\paragraph{Results on MNIST.}

Results are in Table~\ref{tab:mnist_results}. The left table is for the union of $\ell_p$-attacks, and the right table is for the union of $\ell_\infty$ and RT attacks. In both cases, the multi-perturbation training strategies ``succeed'', in that models Adv$_\text{avg}$ and Adv$_\text{max}$ achieve higher multi-perturbation accuracy than any of the models trained against a single perturbation type. 

The results for $\ell_\infty$ and RT attacks are promising, although the best model Adv$_\text{max}$ only achieves $1-\calR_{\text{adv}}^{\text{max}}=83.8\%$ and $1-\calR_{\text{adv}}^{\text{avg}}=87.6\%$, which is far less than the optimal values, $1-\OPT(\calR_{\text{adv}}^{\text{max}})=\min\{91.4\%, 94.6\%\}=91.4\%$ and $1-\OPT(\calR_{\text{adv}}^{\text{avg}})=(91.4\% + 94.6\%)/2 = 93\%$. Thus, these models do exhibit some form of the robustness trade-off analyzed in Section~\ref{sec:theory}.

\begin{table}[t]
	\caption{\textbf{Evaluation of MNIST models trained on $\ell_\infty, \ell_1$ and $\ell_2$ attacks (left) or $\ell_\infty$ and rotation-translation (RT) attacks (right).} Models Adv$_\infty$, Adv$_1$, Adv$_2$ and Adv$_{\text{RT}}$ are trained on a single attack, while Adv$_{\text{avg}}$ and Adv$_{\text{max}}$ are trained on multiple attacks using the ``avg'' and ``max'' strategies. 
	The columns show a model's accuracy on individual perturbation types, on the union of them ($1-\calR_{\text{adv}}^{\text{max}}$), and the average accuracy across them ($1-\calR_{\text{adv}}^{\text{avg}}$).
	The best results are in bold (at $95\%$ confidence). Results in red indicate gradient-masking, see Appendix~\ref{apx:results} for a breakdown of all attacks.
	\ifnips\\[-0.5em]\else\fi
	}
	\label{tab:mnist_results}
	\centering
	\small
	%\ifnips
	%\setlength{\tabcolsep}{4pt}
	%\else
	\setlength{\tabcolsep}{3pt}
	%\fi
	\renewcommand{\arraystretch}{0.9}
	\begin{minipage}{0.51\textwidth}
		\centering
		\begin{tabular}{@{}l@{\hskip 4pt}r r r r a a @{}}
			Model & Acc.	& \multicolumn{1}{c}{$\ell_\infty$} & \multicolumn{1}{c}{$\ell_1$}& \multicolumn{1}{c}{$\ell_2$} & $1-\calR_{\text{adv}}^{\text{max}}$ & $1-\calR_{\text{adv}}^{\text{avg}}$ \\
			\toprule
			Nat 					& \textbf{99.4} &   0.0 &12.4 & 8.5 & 0.0 & 7.0\\
			\addlinespace
			Adv$_\infty$		& \textbf{99.1}  & \textbf{91.1} & {\color{red}12.1} & {\color{red}11.3} & 6.8 & 38.2\\
			Adv$_1$				& 98.9 &  0.0 & \textbf{78.5} & 50.6 & 0.0
			& 43.0\\
			Adv$_2$				& 98.5 & 0.4 & 68.0 & \textbf{71.8} & 0.4
			& 46.7\\
			\addlinespace
			Adv$_{\text{avg}}$ & 97.3 & 76.7 & {\color{red}53.9} & {\color{red}58.3} & \textbf{49.9} & \textbf{63.0}\\
			Adv$_{\text{max}}$ & 97.2 & 71.7 & {\color{red}62.6} & {\color{red}56.0} & \textbf{52.4} & \textbf{63.4}\\
		\end{tabular}
	\end{minipage}
	\hfill
	\begin{minipage}{0.45\textwidth}
		\renewcommand{\arraystretch}{0.9}
		\begin{tabular}{@{}l@{\hskip 4pt}r r r a a @{}}
			Model & Acc.	& \multicolumn{1}{c}{$\ell_\infty$} & \multicolumn{1}{c}{RT} & $1-\calR_{\text{adv}}^{\text{max}}$ & $1-\calR_{\text{adv}}^{\text{avg}}$ \\
			\toprule
			Nat & \textbf{99.4} & 0.0 & 0.0 & 0.0 & 0.0\\
			\addlinespace
			Adv$_\infty$ & \textbf{99.1} & \textbf{91.4} & 0.2 & 0.2 & 45.8\\
			Adv$_{\text{RT}}$ & \textbf{99.3} & 0.0 & \textbf{94.6} & 0.0 & 47.3\\
			\addlinespace
			Adv$_{\text{avg}}$ & \textbf{99.2} & 88.2 & 86.4 & \textbf{82.9} & \textbf{87.3}\\
			Adv$_{\text{max}}$ & 98.9 & 89.6 & 85.6 & \textbf{83.8} & \textbf{87.6}\\
			\\
		\end{tabular}
	\end{minipage}
	%\ifnips\vspace{-1.5em}\else\vspace{-0.5em}\fi
	\vspace{-0.5em}
\end{table}

The $\ell_p$ results are surprisingly mediocre and re-raise questions about whether MNIST can be considered ``solved'' from a robustness perspective. Indeed, while training \emph{separate} models to resist $\ell_1, \ell_2$ or $\ell_\infty$ attacks works well, resisting all attacks simultaneously fails. This agrees with the results of Schott et al.~\cite{schott2018towards}, whose models achieve either high $\ell_\infty$ or $\ell_2$ robustness, but not both simultaneously. We show that in our case, this lack of robustness is partly due to gradient masking.

\paragraph{First-order adversarial training and gradient masking on MNIST.}
The model Adv$_\infty$ is not robust to $\ell_1$ and $\ell_2$-attacks. This is unsurprising as the model was only trained on $\ell_\infty$-attacks. Yet, comparing the model's accuracy against multiple types of $\ell_1$ and $\ell_2$ attacks (see Appendix~\ref{apx:results}) reveals a more curious phenomenon: Adv$_\infty$ has high accuracy against \emph{first-order} $\ell_1$ and $\ell_2$-attacks such as PGD, but is broken by decision-free attacks. This is an indication of gradient-masking~\cite{papernot2016practical,tramer2018ensemble,athalye2018obfuscated}.

This issue had been observed before~\cite{schott2018towards, li2018second}, but an explanation remained illusive, especially since $\ell_\infty$-PGD does not appear to suffer from gradient masking (see~\cite{madry2018towards}). We explain this phenomenon by
inspecting the learned features of model Adv$_\infty$, as in~\cite{madry2018towards}. We find that the model's first layer learns threshold filters $\vec{z}=\relu(\alpha\cdot(\vec{x}-\epsilon))$ for $\alpha > 0$.
As most pixels in MNIST are zero, most of the $z_i$ cannot be activated by an $\epsilon$-bounded $\ell_\infty$-attack. The $\ell_\infty$-PGD thus 
optimizes a smooth (albeit flat) loss function. In contrast, $\ell_1$- and $\ell_2$-attacks can move a pixel $x_i=0$ to $\hat{x}_i > \epsilon$ thus activating $z_i$, but have no gradients to rely on (i.e, $\mathrm{d} z_i/ \mathrm{d}{x}_i = 0$ for any ${x}_i \leq \epsilon$). Figure~\ref{fig:masking} in Appendix~\ref{apx:masking} shows that the model's loss resembles a step-function, for which first-order attacks such as PGD are inadequate.

Note that training against first-order $\ell_1$ or $\ell_2$-attacks directly (i.e., models Adv$_1$ and Adv$_2$ in Table~\ref{tab:mnist_results}), seems to yield genuine robustness to these perturbations. This is surprising in that, because of gradient masking, model Adv$_\infty$ actually achieves lower training loss against first-order $\ell_1$ and $\ell_2$-attacks than models Adv$_1$ and Adv$_2$. That is, Adv$_1$ and Adv$_2$ converged to sub-optimal local minima of their respective training objectives, yet these minima generalize much better to stronger attacks.

The models Adv$_\text{avg}$ and Adv$_\text{max}$ that are trained against $\ell_\infty, \ell_1$ and $\ell_2$-attacks also learn to use thresholding to resist $\ell_\infty$-attacks while spuriously masking gradient for $\ell_1$ and $\ell_2$-attacks. This is evidence that, unlike previously thought~\cite{tsipras2019robustness}, training against a strong first-order attack (such as PGD) can cause the model to minimize its training loss via gradient masking. To circumvent this issue, alternatives to first-order adversarial training seem necessary. Potential (costly) approaches include training on gradient-free attacks, or extending certified defenses~\cite{raghunathan2018certified,wong2018provable} to multiple perturbations. Certified defenses provide provable bounds that are much weaker than the robustness attained by adversarial training, and certifying multiple perturbation types is likely to exacerbate this gap.

\paragraph{Results on CIFAR10.}

The left table in Table~\ref{tab:cifar_results} considers the union of $\ell_\infty$ and $\ell_1$ perturbations, while the right table considers the union of $\ell_\infty$ and RT perturbations.
As on MNIST, the models Adv$_{\text{avg}}$ and Adv$_{\text{max}}$ achieve better multi-perturbation robustness than any of the models trained on a single perturbation, but fail to match the optimal error rates we could hope for. For $\ell_1$ and $\ell_\infty$-attacks, we achieve $1-\calR_{\text{adv}}^{\text{max}}=61.1\%$ and $1-\calR_{\text{adv}}^{\text{avg}}=64.1\%$, again significantly below the optimal values, $1-\OPT(\calR_{\text{adv}}^{\text{max}})=\min\{71.0\%, 66.2\%\}=66.2\%$ and $1-\OPT(\calR_{\text{adv}}^{\text{avg}})=(71.0\% + 66.2\%)/2 = 68.6\%$. The results for $\ell_\infty$ and RT attacks are qualitatively and quantitatively similar. %
\footnote{An interesting open question is why the model Adv$_\text{avg}$ trained on $\ell_\infty$ and RT attacks does not attain optimal average robustness $\calR_{\text{adv}}^{\text{avg}}$. Indeed, on CIFAR10, detecting the RT attack of~\cite{engstrom2017rotation} is easy, due to the black in-painted pixels in a transformed image. The following ``ensemble'' model thus achieves optimal $\calR_{\text{adv}}^{\text{avg}}$ (but not necessarily optimal $\calR_{\text{adv}}^{\text{max}}$): on input $\hat{\vec{x}}$, return Adv$_{\text{RT}}(\hat{\vec{x}})$ if there are black in-painted pixels, otherwise return Adv$_\infty(\hat{\vec{x}})$. The fact that model Adv$_\text{avg}$ did not learn such a function might hint at some limitation of adversarial training.
}

\begin{table}[t]
	\centering
	\setlength{\tabcolsep}{3pt}
	\renewcommand{\arraystretch}{0.9}
	
	\caption{\textbf{Evaluation of CIFAR10 models trained against $\ell_\infty$ and $\ell_1$ attacks (left) or $\ell_\infty$ and rotation-translation (RT) attacks (right).} Models Adv$_\infty$, Adv$_1$ and Adv$_{\text{RT}}$ are trained against a single attack, while Adv$_{\text{avg}}$ and Adv$_{\text{max}}$ are trained against two attacks using the ``avg'' and ``max'' strategies. 
	The columns show a model's accuracy on individual perturbation types, on the union of them ($1-\calR_{\text{adv}}^{\text{max}}$), and the average accuracy across them ($1-\calR_{\text{adv}}^{\text{avg}}$).	The best results are in bold (at $95\%$ confidence). A breakdown of all $\ell_1$ attacks is in Appendix~\ref{apx:results}.
	\ifnips\\[-0.5em]\else\fi
}
	\label{tab:cifar_results}
	
	\begin{minipage}{0.48\textwidth}
		\small
		\centering
		\begin{tabular}{@{}l r r r a a @{}}
			%&&\multicolumn{1}{c}{$\ell_\infty (\frac{4}{255})$} & \multicolumn{1}{c}{$\ell_1 (\frac{2000}{255})$} \\
			%\cmidrule(lr){3-3} \cmidrule(lr){4-4}
			Model & Acc.	& \multicolumn{1}{c}{$\ell_\infty$} & 
			%PGD & EAD & PA$\ $ & 
			\multicolumn{1}{c}{$\ell_1$} & $1-\calR_{\text{adv}}^{\text{max}}$ & $1-\calR_{\text{adv}}^{\text{avg}}$ \\
			\toprule
			Nat							& \textbf{95.7} & 0.0 & 
			%0.2 & 0.0 & 29.6 & 
			0.0 & 0.0 & 0.0\\
			\addlinespace
			Adv$_{\infty}$			& 92.0 & \textbf{71.0} & 
			%19.4 & 17.6 & 52.7 & 
			16.4 & 16.4 & 44.9\\
			Adv$_{1}$	 			& 90.8 & 53.4 & 
			%66.6 & 66.6 & 84.7 & 
			\textbf{66.2} & 53.1& 60.0 \\
			\addlinespace
			Adv$_{\text{avg}}$	 & 91.1	 & 64.1	& 60.8 & \textbf{59.4}	& \textbf{62.5} \\
			Adv$_{\text{max}}$	& 91.2 & 65.7 & 62.5 & \textbf{61.1} & \textbf{64.1}
		\end{tabular}
	\end{minipage}
	\hfill
	\begin{minipage}{0.48\textwidth}
		\centering
		\small
		\begin{tabular}{@{}l r r r a a @{}}
			%\vphantom{$\ell_\infty (\frac{4}{255})$}\\
			%\addlinespace
			Model & Acc.	& \multicolumn{1}{c}{$\ell_\infty$} &\multicolumn{1}{c}{RT} & $1-\calR_{\text{adv}}^{\text{max}}$ & $1-\calR_{\text{adv}}^{\text{avg}}$ \\
			\toprule
			Nat							& \textbf{95.7} & 0.0 & 5.9 & 0.0 & 3.0\\
			\addlinespace
			Adv$_{\infty}$			& 92.0 & \textbf{71.0} & 8.9 & 8.7 & 40.0 \\
			Adv$_{\text{RT}}$	 & \textbf{94.9} & 0.0 & \textbf{82.5} & 0.0 & 41.3 \\
			\addlinespace
			Adv$_{\text{avg}}$	 & 93.6 & 67.8  & 78.2 & \textbf{65.2} & \textbf{73.0}  \\		
			Adv$_{\text{max}}$	& 93.1 & \textbf{69.6} & 75.2 & \textbf{65.7} & \textbf{72.4} \\
		\end{tabular}
	\end{minipage}
	%\ifnips\vspace{-1.25em}\fi
\end{table}

Interestingly, models Adv$_\text{avg}$ and Adv$_\text{max}$ achieve $100\%$ \emph{training accuracy}. Thus, multi-perturbation robustness increases the \emph{adversarial generalization gap}~\cite{schmidt2018adversarially}. These models might be resorting to more memorization because they fail to find features robust to both attacks.

\paragraph{Affine Adversaries.}
%\label{ssec:experiment_affine}

%In Section~\ref{sec:}, we showed that for a simple data distribution, affine combinations of $\ell_p$ perturbations were no stronger than the original $\ell_p$ perturbations, unless the target model exhibits some form of non-linearity. In contrast, we showed that affine combinations of $\ell_\infty$ and RT perturbations could be stronger than either individual attack even in the case of linear models.

%\ifnips
%\begin{wraptable}{r}{6.33cm}
	%\vspace{-1.25em}
%\else
\begin{table}[t]
%\fi
	\caption{\textbf{Evaluation of affine attacks.} For models trained with the ``max'' strategy, we evaluate against attacks from a union $S_U$ of perturbation sets, and against an affine adversary that interpolates between perturbations. Examples of affine attacks are in Figure~\ref{fig:affine_attacks}.
	\ifnips\\[-0.5em]\else\fi
}
	\label{tab:cifar_affine}
	\centering
	\small
	\setlength{\tabcolsep}{4pt}
	\renewcommand{\arraystretch}{0.9}
	\begin{tabular}{@{}l l r r @{}}
	Dataset & Attacks & acc.~on $S_{\text{U}}$ & 
	acc.~on $S_\text{affine}$ \\
	\toprule
		MNIST & $\ell_\infty$ \& RT & 83.8 & {62.6} \\
		%\addlinespace
		CIFAR10 & $\ell_\infty$ \& RT & 65.7 & {56.0}\\
		%\addlinespace
		CIFAR10 & $\ell_\infty$ \& $\ell_1$ & {61.1} & {58.0}
	\end{tabular}
%\ifnips
%\vspace{-1em}
%\end{wraptable}
%\else
\vspace{-0.5em}
\end{table}
%\fi
%
Finally, we evaluate the affine attacks introduced in Section~\ref{ssec:affine}. These attacks take affine combinations of two perturbation types, and we apply
them on the models Adv$_\text{max}$ (we omit the $\ell_p$-case on MNIST due to gradient masking). 
To compound $\ell_\infty$ and $\ell_1$-noise, we devise an attack that updates both perturbations in alternation. To compound $\ell_\infty$ and RT attacks, we pick random rotation-translations (with $\pm3\beta$px translations and $\pm30\beta\degree$ rotations), apply an $\ell_\infty$-attack with budget $(1-\beta)\epsilon$ to each, and retain the worst example. 

The results in Table~\ref{tab:cifar_affine} match the predictions of our formal analysis: (1) affine combinations of $\ell_p$ perturbations are no stronger than their union. This is expected given Claim~\ref{thm:linear_affine} and prior observations that neural networks are close to linear near the data~\cite{goodfellow2014explaining, ribeiro2016should}; (2) combining of $\ell_\infty$ and RT attacks does yield a stronger attack, as shown in Theorem~\ref{thm:affine_RT}.
This demonstrates that robustness to a union of perturbations can still be insufficient to protect against more complex combinations of perturbations.

\section{Discussion and Open Problems}

Despite recent success in defending ML models against some perturbation types~\cite{madry2018towards, engstrom2017rotation, schott2018towards}, extending these defenses to multiple perturbations unveils a clear robustness trade-off.
This tension may be rooted in its unconditional occurrence in natural and simple distributions, as we proved in Section~\ref{sec:theory}. 

Our new adversarial training strategies fail to achieve competitive robustness to more than one attack type, but narrow the gap towards multi-perturbation robustness. 
%Interestingly, our ``max'' strategy always outperforms the ``avg'' strategy, despite discarding most adversarial examples. 
We note that the optimal risks $\calR_{\text{adv}}^{\text{max}}$ and $\calR_{\text{adv}}^{\text{avg}}$ that we achieve are very close. Thus, for most data points, the models are either robust to all perturbation types or none of them. This hints that some points (sometimes referred to as \emph{prototypical examples}~\cite{carlini2018prototypical, stock2018convnets}) are inherently easier to classify robustly, regardless of the perturbation type.

We showed that first-order adversarial training for multiple $\ell_p$-attacks suffers from gradient masking on MNIST. Achieving better robustness on this simple dataset is an open problem.
Another challenge is reducing the cost of our adversarial training strategies, which scale linearly in the number of perturbation types. Breaking this linear dependency requires efficient techniques for finding perturbations in a union of sets, which might be hard for sets with near-empty intersection (e.g., $\ell_\infty$ and $\ell_1$-balls). 
The cost of adversarial training has also be reduced by merging the inner loop of a PGD attack and gradient updates of the model parameters~\cite{shafahi2019free,zhang2019you}, but it is unclear how to extend this approach to a union of perturbations (some of which are not optimized using PGD, e.g., rotation-translations).

Hendrycks and Dietterich~\cite{hendrycks2018benchmarking}, and Geirhos et al.~\cite{geirhos2018imagenet} recently measured robustness of classifiers to multiple common (i.e., non-adversarial) image corruptions (e.g., random image blurring). In that setting, they also find that different classifiers achieve better robustness to some corruptions, and that no single classifier achieves the highest accuracy under all forms. The interplay between multi-perturbation robustness in the adversarial and common corruption case is worth further exploration.

{
%\ifnips
%\newpage
\medskip
\small
%\fi
\bibliographystyle{abbrv}
\bibliography{biblio}
}

\ifdefined\issubmission
\else
\ifnips\newpage\fi
\appendix
\section{Experimental Setup}
\label{apx:setup}

\paragraph{MNIST.}
We use the CNN model from Madry et al.~\cite{madry2018towards} and train for $10$ epochs with Adam and a learning rate of $10^{-3}$ reduced to $10^{-4}$ after $5$ epochs (batch size of $100$). To accelerate convergence, we train against a weaker adversary in the first epoch (with $1/3$ of the perturbation budget).
For training, we use PGD with $40$ iterations for $\ell_\infty$ and $100$ iterations for $\ell_1$ and $\ell_2$. For rotation-translations, we use the attack from~\cite{engstrom2017rotation} that picks the worst of $10$ random rotation-translations.

\paragraph{CIFAR10.}
We use the same wide ResNet model as~\cite{madry2018towards}. We train for $80$k steps of gradient descent with batch size $128$ ($205$ epochs). When using the ``avg'' strategy for wide ResNet models, we had to halve the batch size to avoid overflowing the GPU's memory. We accordingly doubled the number of training steps and learning rate schedule. We use a learning rate of $0.1$ decayed by a factor $10$ after $40$k and $60$k steps, a momentum of $0.9$, and weight decay of $0.0002$. Except for the RT attack, we use standard data augmentation with random padding, cropping and horizontal flipping (see~\cite{engstrom2017rotation} for details). We extract $1{,}000$ points from the CIFAR10 test as a validation set for early-stopping.

For training, we use PGD with $10$ iterations for $\ell_\infty$, and $20$ iterations for $\ell_1$. %
\footnote{Our new attack $\ell_1$-attack, described in Section~\ref{sec:adv-training}, has a parameter $q$ to controls the sparsity of the gradient updates. When leaving this parameter constant during training, the model overfits and fails to achieve general robustness. To resolve this issue, we sample $q \in [80\%, 99.5\%]$ at random for each attack during training. We also found that $10$ iterations were insufficient to get a strong attack and thus increased the iteration count to $20$.}
For rotation-translations, we also use the attack from~\cite{engstrom2017rotation} that trains on the worst of $10$ randomly chosen rotation-translations.

%For the ``max'' strategy, we choose the worst attack independently for each image in a training batch. That is, each adversarial batch can contain adversarial examples from different perturbation models.
%\input{apx_l2}
\section{Performance of the Sparse \texorpdfstring{$\ell_1$}{L1}-Descent Attack}
\label{apx:pgd-l1}

In Figure~\ref{fig:pgd_l1}, we compare the performance of our new Sparse $\ell_1$-Descent Attack (SLIDE) for different choices of gradient sparsity. We also compare to the standard PGD attack with the steepest-descent update rule, as well as a recent attack proposed in~\cite{kang2019transfer} that adapts the Frank-Wolfe optimization algorithm for finding $\ell_1$-bounded adversarial examples. As we explained in Section~\ref{sec:adv-training}, we expect our attack to outperform PGD as the steepest-descent vector is too sparse in the $\ell_1$-case, and we indeed observe a significant improvement by choosing denser updates.

The subpar performance of the Frank-Wolfe algorithm is also intriguing. We believe it is due to the attack's linearly decreasing step-size (the $k$\textsuperscript{th} iteration has a step-size of $O(1/k)$, see~\cite{kang2019transfer} for details). While this choice is appropriate for optimizing convex functions, in the non-convex case it overly emphasizes the first steps of the attack, which intuitively should increase the likelihood of landing in a local minima. 

\begin{figure}[H]
	\centering
	\begin{subfigure}{0.48\textwidth}
		\includegraphics[width=\textwidth]{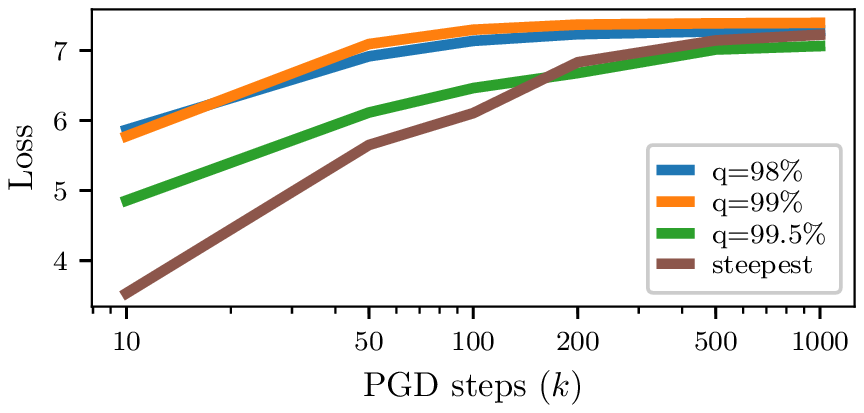}
	\end{subfigure}
	\hfill
	\begin{subfigure}{0.48\textwidth}
		\includegraphics[width=\textwidth]{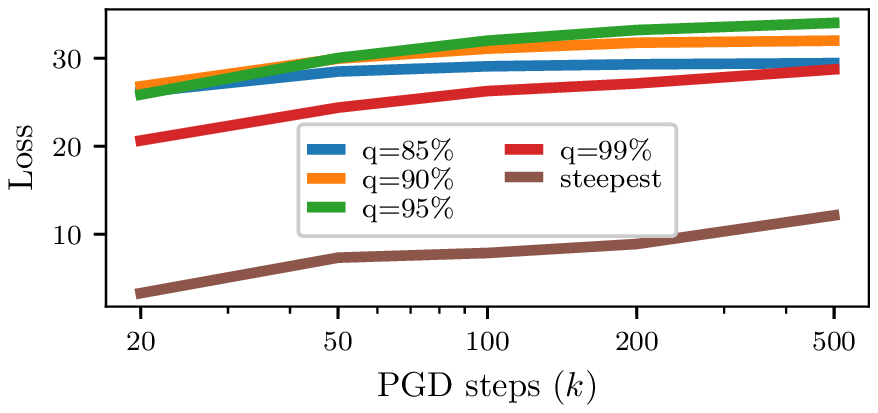}
	\end{subfigure}

	\caption{\textbf{Performance of the Sparse $\ell_1$-Descent Attack on MNIST (left) and CIFAR10 (right) for different choices of descent directions.} We run the attack for up to $1{,}000$ steps and plot the evolution of the cross-entropy loss, for an undefended model. We vary the sparsity of the gradient updates (controlled by the parameter $q$), and compare to the standard PGD attack that uses the steepest descent vector, as well as the Frank-Wolfe $\ell_1$-attack from~\cite{kang2019transfer}. For appropriate $q$, our attack vastly outperforms PGD and Frank-Wolfe.}
	\label{fig:pgd_l1}
\end{figure}

\newpage
\section{Breakdown of \texorpdfstring{$\ell_p$}{LP}-Attacks on Adversarially Trained Models}
\label{apx:results}

Tables~\ref{tab:mnist_lps_detailed} and~\ref{tab:cifar_lps_detailed} below give a more detailed breakdown of each model's accuracy against each $\ell_p$ attack we considered.  For each model and attack, we evaluate the attack on $1{,}000$ test points and report the accuracy. For each individual perturbation type (i.e., $\ell_\infty, \ell_1, \ell_2$), we further report the accuracy obtained by choosing the worst attack for each input. Finally, we report the accuracy against the union of all attacks ($1-\calR_{\text{adv}}^{\text{max}}$) as well as the average accuracy across perturbation types ($1-\calR_{\text{adv}}^{\text{avg}}$).

\begin{table}[h]
	\centering
	\addtolength{\leftskip} {-2cm}
	\addtolength{\rightskip}{-2cm}
	\caption{\textbf{Breakdown of all attacks on MNIST models.}
		For $\ell_\infty$, we use PGD and the Boundary Attack++ (BAPP)~\cite{chen2019boundary}. For $\ell_1$, we use our new Sparse $\ell_1$-Descent Attack (SLIDE), EAD~\cite{chen2018ead} and the Pointwise Attack (PA)~\cite{schott2018towards}. For $\ell_2$, we use PGD, C\&W~\cite{carlini2016towards} and the Boundary Attack (BA)~\cite{brendel2018decision}.\\[-0.5em]}
	\label{tab:mnist_lps_detailed}
	\small
	\setlength{\tabcolsep}{4pt}
	\renewcommand{\arraystretch}{0.9}
	\begin{tabular}{@{}l r rra rrra rrra b b @{}}
		&&\multicolumn{3}{c}{$\ell_\infty$} & \multicolumn{4}{c}{$\ell_1$} & \multicolumn{4}{c}{$\ell_2$} & \multicolumn{1}{c}{}\\
		\cmidrule(lr){3-5} \cmidrule(lr){6-9} \cmidrule(lr){10-13}
		Model & Acc. & PGD & BAPP & All $\ell_\infty$ & SLIDE & EAD & PA$\ $ & All $\ell_1$ & PGD & C\&W & BA$\ $ & All $\ell_2$ & $1-\calR_{\text{adv}}^{\text{max}}$ & $1-\calR_{\text{adv}}^{\text{avg}}$ \\
		\toprule
		Nat 					& \textbf{99.4} & 0.0 & 13.0 & 0.0 & 13.0 & 18.8 & 72.1 & 12.4 & 11.0 & 10.4 & 31.0 & 8.5 & 0.0 & 7.0\\
		\addlinespace
		Adv$_\infty$		& \textbf{99.1} & 91.1 & 98.5 & \textbf{91.1} & {66.9} & {58.4} & 15.0 & {\color{red}12.1} & {78.1} & {78.4} & 14.0 & {\color{red}11.3} & 6.8 & 38.2\\
		Adv$_1$				& 98.9 &  0.0 & 43.5  & 0.0 & 78.6 & 81.0 & 91.6 & \textbf{78.5} & 53.0 & 52.0 & 69.7 & 50.6 & 0.0
		& 43.0\\
		Adv$_2$				& 98.5 & 0.4 &  78.5 & 0.4 & 70.4 & 69.3 & 89.7 & 68.0 & 74.7 & 74.5 & 81.7 & \textbf{71.8} & 0.4
		& 46.7\\
		\addlinespace
		Adv$_{\text{avg}}$ & 97.3 & 76.7 & 98.0 & 76.7 & {66.3} & {62.4} & {68.6} & {\color{red}53.9} & {77.7} & {72.3} & {64.6} & {\color{red}58.3} & \textbf{49.9} & \textbf{63.0}\\
		Adv$_{\text{max}}$ & 97.2 & 71.7 & 98.5 & 71.7 & {72.1} & {70.0} & {69.6} & {\color{red}62.6} & {75.7} & {71.8} & {59.7} & {\color{red}56.0} & \textbf{52.4} & \textbf{63.4}\\
	\end{tabular}
	%\ifnips\vspace{-1.5em}\fi
\end{table}

\begin{table}[h]
	\caption{\textbf{Breakdown of all attacks on CIFAR10 models.}
		For $\ell_\infty$, we use PGD. For $\ell_1$, we use our new Sparse $\ell_1$-descent attack (SLIDE), EAD~\cite{chen2018ead} and the Pointwise Attack (PA)~\cite{schott2018towards}.\\[-0.5em]}
	\label{tab:cifar_lps_detailed}
	\centering
	\small
	\setlength{\tabcolsep}{4pt}
	\renewcommand{\arraystretch}{0.9}
	\begin{tabular}{@{}l r ra rrra b b @{}}
		&&\multicolumn{2}{c}{$\ell_\infty$} & \multicolumn{4}{c}{$\ell_1$}\\
		\cmidrule(lr){3-4} \cmidrule(lr){5-8} 
		Model & Acc.	& PGD & All $\ell_\infty$ & SLIDE & EAD & PA$\ $ & All $\ell_1$ & $1-\calR_{\text{adv}}^{\text{max}}$ & $1-\calR_{\text{adv}}^{\text{avg}}$ \\
		\toprule
		Nat	& \textbf{95.7} & 0.0 & 0.0 &
		0.2 & 0.0 & 29.6 & 
		0.0 & 0.0 & 0.0\\
		\addlinespace
		Adv$_{\infty}$			& 92.0 & 71.0 & \textbf{71.0} & 
		19.4 & 17.6 & 52.7 & 
		16.4 & 16.4 & 44.9\\
		Adv$_{1}$	 			& 90.8 & 53.4 & 53.4 &
		66.6 & 66.6 & 84.7 & 
		\textbf{66.2} & 53.1& 60.0 \\
		\addlinespace
		Adv$_{\text{avg}}$	 & 91.1	 & 64.1	& 64.1 &
		61.1& 61.5 & 81.7 &
		60.8 & \textbf{59.4}	& \textbf{62.5} \\
		Adv$_{\text{max}}$	& 91.2 & 65.7 & 65.7 &
		63.1 & 63.0 & 83.4 &
		62.5 & \textbf{61.1} & \textbf{64.1}
	\end{tabular}
	%\ifnips\vspace{-1em}\fi
\end{table}
%\ifnips\else\newpage\fi
\section{Gradient Masking as a Consequence of \texorpdfstring{$\ell_\infty$}{Linf}-Robustness on MNIST.}
\label{apx:masking}

Multiple works have reported on a curious phenomenon that affects the $\ell_\infty$-adversarially trained model of Madry et al.~\cite{madry2018towards} on MNIST. This model achieves strong robustness to the $\ell_\infty$ attacks it was trained on, as one would expect. Yet, on other $\ell_p$-norms (e.g., $\ell_1$~\cite{chen2018ead, schott2018towards} and $\ell_2$~\cite{li2018second, schott2018towards}), its robustness is no better---or even worse---than for an undefended model. Some authors have referred to this effect as \emph{overfitting}, a somewhat unfair assessment of the work of~\cite{madry2018towards}, as their model actually achieves exactly what it was trained to do---namely resist $\ell_\infty$-bounded attacks. Moreover, as our theoretical results suggest, this trade-off may be inevitable (a similar point was made in~\cite{khoury2019geometry}). %To give credence to this statement, our experiments with MNIST in Section~\ref{} show that simultaneously achieving state-of-the-art robustness to $\ell_\infty, \ell_1$ and $\ell_2$ attacks seems hard with current techniques.

The more intriguing aspect of Madry et al.'s MNIST model is that, when attacked by $\ell_1$ or $\ell_2$ adversaries, first-order attacks are sub-optimal. This was previously observed in~\cite{schott2018towards} and in~\cite{li2018second}, where decision-based or second-order attacks vastly outperformed gradient descent for finding
$\ell_1$ or $\ell_2$ adversarial examples. Li et al.~\cite{li2018second} argue that this effect is due to the gradients of the adversarially trained model having much smaller magnitude than in a standard model. Yet, this fails to explain why first-order attacks appear to be optimal in the $\ell_\infty$-norm that the model was trained against.

A natural explanation for this discrepancy follows from an inspection of the robust model's first layer (as done in~\cite{madry2018towards}). All kernels of the model's first convolutional layer have very small norm, except for three kernels that have a single large weight. This reduces the convolution to a thresholding filter, which we find to be of one of two forms: either $\relu(\alpha \cdot (x - 0.3))$ or $\relu(\alpha \cdot (x - 0.7))$ for constant $\alpha > 0$.\footnote{Specifically, for the ``secret'' model of Madry et al., the three thresholding filters are approximately $\relu(0.6\cdot(x-0.3))$, $\relu(1.34\cdot(x-0.3))$ and $\relu(0.86\cdot(x-0.7))$.}
%Moreover, we observe that the kernels in the second convolutional layer that operate on a $\relu(\alpha \cdot (x - \epsilon))$ channel have negative weights, while the ones that operate on the $\relu(\alpha \cdot (x - (1-\epsilon)))$ channel have positive ones.
Thus, the model's first layer forms a piece-wise function with three distinct regimes, depending on the value of an input pixel $x_i$: (1) for $x_i \in [0, 0.3]$, the output is only influenced by the low-weight kernels. For $x_i \in [0.3, 1]$, the $\relu(\alpha \cdot (x - 0.3))$ filters become active, and override the signal from the low-weight kernels. For $x_i \in [0.7, 1]$, the $\relu(\alpha \cdot (x - 0.7))$ filters are also active.

As most MNIST pixels are in $\{0,1\}$, $\ell_\infty$-attacks operate
in a regime where most perturbed pixels are in $[0, 0.3] \cup [0.7, 1]$.
The model's large-weight ReLUs thus never transition between active and inactive, which leads to a smooth, albeit flat loss that first-order methods navigate effectively. 

For $\ell_1$ and $\ell_2$ attacks however, one would expect some of the ReLUs to be flipped as the attacks can make changes larger that $0.3$ to some pixels. Yet, as most MNIST pixels are $0$ (the digit's background), nearly all large-weight ReLUs start out inactive, with gradients equal to zero. 
A first-order adversary thus has no information on which pixels to focus the perturbation budget on.

Decision-based attacks sidestep this issue by disregarding gradients entirely. Figure~\ref{fig:masking} shows two examples of input points where a decision-based attack (Pointwise Attack for $\ell_1$~\cite{schott2018towards} and Boundary Attack for $\ell_2$~\cite{brendel2018decision}) finds an adversarial example in a direction that is orthogonal to the one explored by PGD. The loss surface exhibits sharp thresholding steps, as predicted by our analysis.

\begin{figure}[t]
	\centering
	\begin{subfigure}{0.4\textwidth}
		\includegraphics[width=\textwidth]{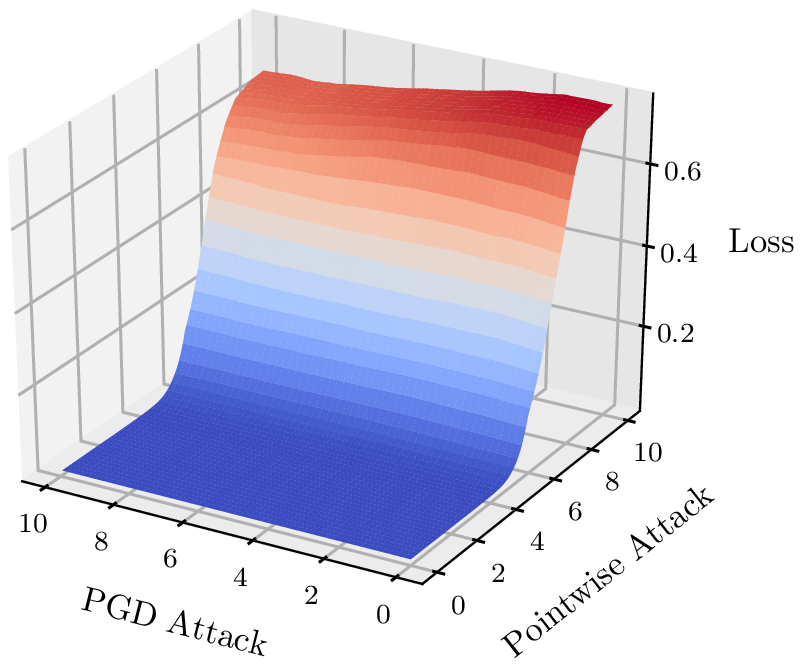}
	\end{subfigure}
	\quad\quad\quad\quad
	\begin{subfigure}{0.4\textwidth}
		\includegraphics[width=\textwidth]{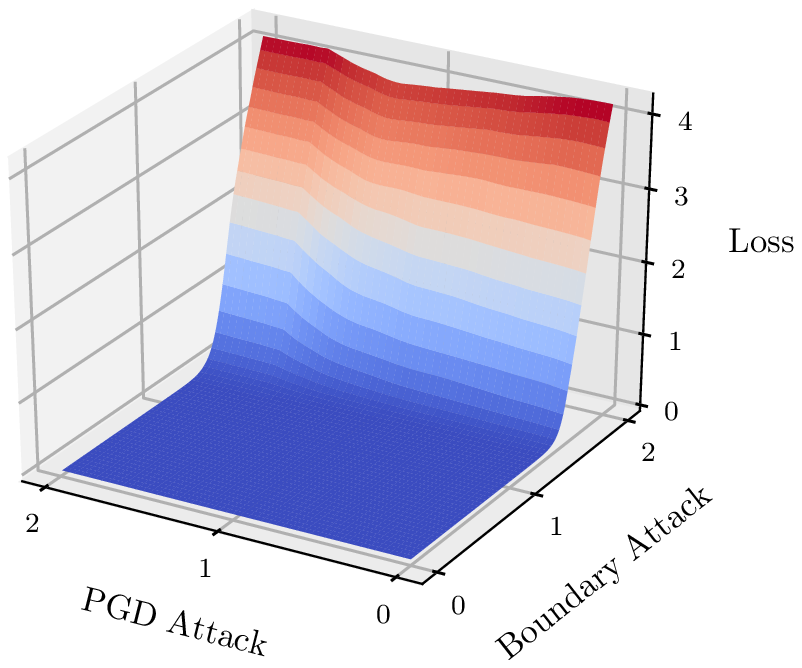}
	\end{subfigure}
	
	\caption{\textbf{Gradient masking in an $\ell_\infty$-adversarially trained model on MNIST, evaluated against $\ell_1$-attacks (left) and $\ell_2$-attacks (right).} The model is trained against an $\ell_\infty$-PGD adversary with $\epsilon=0.3$. For a randomly chosen data point $\vec{x}$, we compute an adversarial perturbation $\vec{r}_{\text{PGD}}$ using PGD and $\vec{r}_{\text{GF}}$ using a gradient-free attack. The left plot is for $\ell_1$-attacks with $\epsilon=10$ and the right plot is for $\ell_2$-attacks with $\epsilon=2$. The plots display the loss on points of the form $\hat{\vec{x}} \coloneqq \vec{x} + \alpha \cdot \vec{r}_{\text{PGD}} + \beta \cdot \vec{r}_{\text{GF}}$, for $\alpha, \beta \in [0, \epsilon]$. The loss surface behaves like a step-function, and gradient-free attacks succeed in finding adversarial examples where first-order methods failed.
	\ifnips\\[-1em]\else\\[-2em]\fi}
	\label{fig:masking}
	%\ifnips\vspace{-1em}\fi
\end{figure}

When we explicitly train against first-order $\ell_1$ or $\ell_2$ adversaries (models Adv$_1$ and Adv$_2$ in Table~\ref{tab:mnist_results}, left), the resulting model is robust (at least empirically) to $\ell_1$ or $\ell_2$ attacks. Note that model Adv$_\infty$ actually achieves higher robustness to $\ell_2$-PGD attacks than Adv$_2$ (due to gradient-masking). Thus, the Adv$_2$ model converged to a \emph{sub-optimal} local minima of its first-order adversarial training procedure (i.e., learning the same thresholding mechanism as Adv$_\infty$ would yield lower loss). Yet, this sub-optimal local minima generalizes much better to other $\ell_2$ attacks.

Models trained against $\ell_\infty, \ell_1$ and $\ell_2$ attacks (i.e., Adv$_\text{all}$ and Adv$_\text{max}$) in Table~\ref{tab:mnist_results}, left) also learn to use thresholding to achieve robustness to $\ell_\infty$ attacks, while masking gradients for $\ell_1$ and $\ell_2$ attacks.

%Moreover, the kernels in the second convolutional layer that operate on a $\relu(\alpha \cdot (x - \epsilon))$ channel have negative weights, while the ones that operate on the $\relu(\alpha \cdot (x - (1-\epsilon)))$ channel are positive. These observations let us draw a picture of the gradient of the model's (non-zero) second layer outputs with respect to an input pixel $x_i$: for $x_i \in [0, \epsilon]$, the gradient is very small as it is only influenced by the first layer's low-weight kernels. For $x_i \in [\epsilon, 1-\epsilon]$, the $\relu(\alpha \cdot (x - \epsilon))$ filters become active, and produce a constant negative gradient due to their second-layer weights. For $x_i \in [1-\epsilon]$, the $\relu(\alpha \cdot (x - (1-\epsilon)))$ filters are also activated and counterbalance the gradients of the other activations.
%The input gradient thus roughly forms a step-function, with inflection points at $\epsilon$ and $1-\epsilon$. For the $\ell_\infty$-norm, most perturbed MNIST pixels remain within the ranges $[0, \epsilon]$ or $[1-\epsilon, 1]$, so first-order adversaries are unaffected by these inflections (i.e., the thresholding filters never transition from active to inactive, or vice versa). For $\ell_1$ or $\ell_2$ attacks however, first-order methods are much more likely to get stuck in local minima. Indeed, the optimal $\ell_1$ or $\ell_2$ attack presumably requires pushing some features outside the $[0, \epsilon] \cup [1-\epsilon, 1]$ range, 
\ifnips\newpage\fi
\section{Examples of Affine Combinations of Perturbations}
\label{sec:examples}

In Figure~\ref{fig:affine_attacks}, we display examples of $\ell_1$, $\ell_\infty$ and rotation-translation attacks on MNIST and CIFAR10, as well as affine attacks that interpolate between two attack types.

\begin{figure}[h]
	%\vspace{-1em}
		\begin{subfigure}{0.32\textwidth}
			\includegraphics[width=\textwidth]{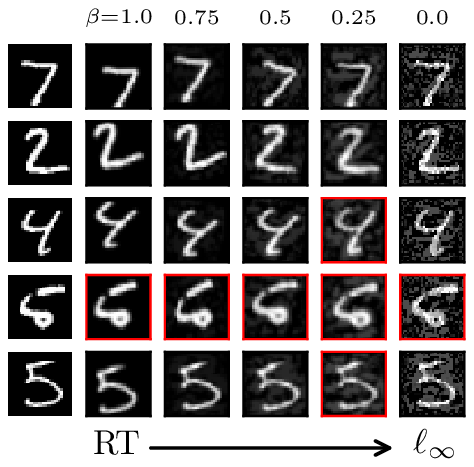}
		\end{subfigure}
		\hfill
		\begin{subfigure}{0.32\textwidth}
			\includegraphics[width=\textwidth]{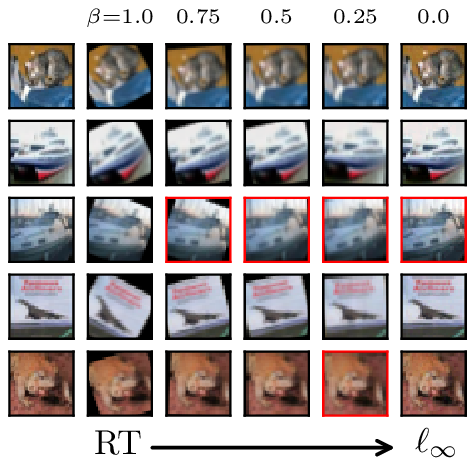}
		\end{subfigure}
		\hfill
		\begin{subfigure}{0.32\textwidth}
			\includegraphics[width=\textwidth]{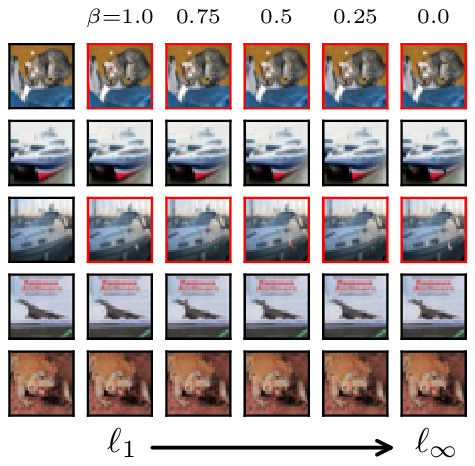}
		\end{subfigure}
		
		\caption{\textbf{Adversarial examples for $\ell_\infty$, $\ell_1$ and rotation-translation (RT) attacks, and affine combinations thereof.} The first column in each subplot shows clean images. The following five images in each row linearly interpolate between two attack types, as described in Section~\ref{ssec:affine}. Images marked in red are mis-classified by a model trained against both types of perturbations. Note that there are examples for which combining a rotation-translation and $\ell_\infty$-attack is stronger than either perturbation type individually.}
		\label{fig:affine_attacks}
\end{figure}
%\ifnips\else\newpage\fi
\section{Proof of Theorem~\ref{thm:linf_l1} (Robustness trade-off between \texorpdfstring{$\ell_\infty$}{Linf} and \texorpdfstring{$\ell_1$-}{L1} norms)}
\label{apx:proof_l1_linf}

Our proof follows a similar structure to the proof of Theorem 2.1 in~\cite{tsipras2019robustness}, although the analysis is slightly simplified in our case as we are comparing two perturbation models, an $\ell_\infty$-bounded one and an $\ell_1$-bounded one, that are essentially orthogonal to each other. With a perturbation of size $\epsilon = 2\eta$, the $\ell_\infty$-bounded noise can  ``flip'' the distribution of the features $x_1, \dots, x_d$ to reflect the opposite label, and thus destroy any information that a classifier might extract from those features. On the other side, an $\ell_1$-bounded perturbation with $\epsilon=2$ can flip the distribution of $x_0$. By sacrificing some features, a classifier can thus achieve some robustness to either $\ell_\infty$ \emph{or} $\ell_1$ noise, but never to both simultaneously.

For $y \in \{-1, +1\}$, let $\calG^{y}$ be the distribution over feature $x_0$ conditioned on the value of $y$. Similarly, let $\calH^{y}$ be the conditional distribution over features $x_1, \dots, x_d$.
Consider the following perturbations: $\vec{r}_\infty = [0, -2y\eta, \dots, -2y\eta]$ has small $\ell_\infty$-norm, and $\vec{r}_1 = [-2 x_0, 0, \dots, 0]$ has small $\ell_1$-norm. The $\ell_\infty$ perturbation can change $\calH^{y}$ to $\calH^{-y}$, while the $\ell_1$ perturbation can change $\calG^{y}$ to $\calG^{-y}$.

Let $f(\vec{x})$ be any classifier from $\R^{d+1}$ to $\{-1, +1\}$ and define:
\begin{align*}
	 p_{+-} = \Pr_{\vec{x} \sim (\calG^{+1}, \calH^{-1})}[f(\vec{x}) = +1] \;, \quad\quad
	p_{-+} = \Pr_{\vec{x} \sim (\calG^{-1}, \calH^{+1})}[f(\vec{x}) = +1] \;.
\end{align*}
The accuracy of $f$ against the $\vec{r}_\infty$ perturbation is given by:
\begin{equation*}
\Pr[f(\vec{x}+\vec{r}_\infty) = y] = \Pr[y=+1]\cdot p_{+-} + \Pr[y=-1]\cdot (1-p_{-+}) = \frac12 \cdot (1+p_{+-}-p_{-+}) \;.
\end{equation*}
Similarly, the accuracy of $f$ against the $\vec{r}_1$ perturbation is:
\begin{equation*}
\Pr[f(\vec{x}+\vec{r}_1) = y] = \Pr[y=+1]\cdot p_{-+} + \Pr[y=-1]\cdot (1-p_{+-}) = \frac12 \cdot (1+p_{-+}-p_{+-}) \;.
\end{equation*}
Combining these, we get $\Pr[f(\vec{x}+\vec{r}_\infty) = y] + \Pr[f(\vec{x}+\vec{r}_1) = y]  = 1$. 

As $\vec{r}_\infty$ and $\vec{r}_1$ are two specific $\ell_\infty$- and $\ell_1$-bounded perturbations, the above is an upper-bound on the accuracy that $f$ achieves against worst-case perturbation within the prescribed noise models, which concludes the proof.
 \\
\qed

\section{Proof of Theorem~\ref{thm:linf_RT} (Robustness trade-off between \texorpdfstring{$\ell_\infty$}{Linf} and spatial perturbations)}
\label{apx:proof_RT}

The proof of this theorem follows a similar blueprint to the proof of Theorem~\ref{thm:linf_l1}. Recall that an $\ell_\infty$ perturbation with $\epsilon=2\eta$ can flip the distribution of the features $x_1, \dots, x_n$ to reflect an opposite label $y$. The tricky part of the proof is to show that a small rotation or translation can flip the distribution of $x_0$ to the opposite label, without affecting the marginal distribution of the other features too much.

Recall that we model rotations and translations as picking a permutation $\pi$ from some fixed set $\Pi$ of permutations over the indices in $\vec{x}$, with the constraint that feature $x_0$ be moved to at most $N$ different positions for all $\pi \in \Pi$.

We again define $\calG^y$ as the distribution of $x_0$ conditioned on $y$, and $\calH^y$ for the distribution of $x_1, \dots, x_d$.
We know that a small $\ell_\infty$-perturbation can transform $\calH^y$ into $\calH^{-y}$. Our goal is to show that a rotation-translation adversary can change $(\calG^y, \calH^y)$ into a distribution that is very close to $(\calG^{-y}, \calH^y)$. The result of the theorem then follows by arguing that no binary classifier $f$ can distinguish, with high accuracy, between $\ell_\infty$-perturbed examples with label $y$ and rotated examples with label $-y$ (and vice versa).

We first describe our proof idea at a high level. We define an intermediate ``hybrid'' distribution $\calZ^y$ where all $d+1$ features are i.i.d $N(y\eta, 1)$ (that is, $x_0$ now has the same distribution as the other weakly-correlated features). The main step in the proof is to show that for samples from either $(\calG^y, \calH^y)$ or $(\calG^{-y}, \calH^y)$, a random rotation-translation yields a distribution that is very close (in total variation) to $\calZ^y$. From this, we then show that there exists an adversary that applies two rotations or translations in a row, to first transform samples from $(\calG^y, \calH^y)$ into samples close to $\calZ^y$, and then transform those samples into ones that are close to $(\calG^{-y}, \calH^y)$.

We will need a standard version of the Berry-Esseen theorem, stated hereafter for completeness.
\begin{theorem}[Berry-Esseen~\cite{berry1941accuracy}]
	Let $X_1, \dots, X_n$ be independent random variables with $\Exp[X_i] = \mu_i$, $\Exp[X_i^2] = \sigma_i^2 > 0$, and $\Exp[|X_i|^3] = \rho_i < \infty$, where the $\mu_i, \sigma_i$ and $\rho_i$ are constants independent of $n$. Let $S_{n}=X_{1}+\cdots +X_{n}$, with $F_n(x)$ the CDF of $S_n$ and $\Phi(x)$ the CDF of the standard normal distribution. Then,
	\[
	\sup_{x \in \R} \left|F_n(x) - \Phi\left(\frac{x - \Exp[S_n]}{\sqrt{\Var{[S_n]}}}\right)\right| = O(1/\sqrt{n}) \;.
	\]
\end{theorem}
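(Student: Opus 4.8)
The plan is to prove this via the classical Fourier-analytic route: Esseen's smoothing inequality combined with a careful third-order Taylor estimate of the characteristic function of the normalized sum. First I would reduce to the centered, normalized case. Replacing each $X_i$ by $X_i - \mu_i$ alters neither $\sigma_i^2$ nor the shape of $F_n$, so I assume $\mu_i = 0$. Write $B_n^2 = \Var[S_n] = \sum_{i=1}^n \sigma_i^2$ and $L_n = \sum_{i=1}^n \rho_i$. Since the moments are uniformly bounded constants with $\sigma_i^2 > 0$, we have $B_n^2 = \Theta(n)$ and $L_n = \Theta(n)$, so the Lyapunov ratio is $L_n/B_n^3 = \Theta(n^{-1/2})$; the target rate $O(1/\sqrt{n})$ is therefore precisely $O(L_n/B_n^3)$. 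After the affine change of variable $x \mapsto (x-\Exp[S_n])/B_n$, the quantity to bound becomes $\Delta_n \coloneqq \sup_u |G_n(u) - \Phi(u)|$, where $G_n$ is the CDF of $T_n \coloneqq S_n/B_n$.

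Second, I would invoke Esseen's smoothing lemma: for any $T>0$,
\[
\Delta_n \le \frac{1}{\pi}\int_{-T}^{T}\left|\frac{\varphi_n(t) - e^{-t^2/2}}{t}\right|\,dt + \frac{C_0}{T} \;,
\]
where $\varphi_n(t) = \Exp[e^{itT_n}]$ and $C_0$ arises from $\sup_u |\Phi'(u)| = 1/\sqrt{2\pi}$. This converts the distributional problem into a pointwise comparison of characteristic functions on a bounded frequency window, at the cost of a $1/T$ tail term.

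Third comes the analytic heart: estimating $\varphi_n(t) - e^{-t^2/2}$. By independence, $\varphi_n(t) = \prod_{j=1}^n \varphi_j(t/B_n)$, where $\varphi_j$ is the CF of $X_j$. A third-order Taylor expansion gives $\varphi_j(s) = 1 - \tfrac12\sigma_j^2 s^2 + \theta_j(s)$ with $|\theta_j(s)| \le \tfrac16\rho_j|s|^3$. Taking logarithms (legitimate once $|t| \le T \coloneqq c\,B_n^3/L_n$ for a small constant $c$, which keeps each factor bounded away from $0$), summing $\log\varphi_j(t/B_n)$, and using $\sum_j \sigma_j^2/B_n^2 = 1$, I would obtain $|\log\varphi_n(t) + t^2/2| \le C(L_n/B_n^3)|t|^3$ on this window, hence $|\varphi_n(t) - e^{-t^2/2}| \le C'(L_n/B_n^3)|t|^3 e^{-t^2/3}$. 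Dividing by $|t|$ and integrating over $\R$ — the Gaussian factor makes the integral finite and independent of $T$ — yields a contribution of $O(L_n/B_n^3)$.

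Finally I choose $T = c\,B_n^3/L_n = \Theta(\sqrt{n})$, so the smoothing error $C_0/T$ is likewise $O(L_n/B_n^3) = O(1/\sqrt{n})$; combining the two contributions gives $\Delta_n = O(1/\sqrt{n})$. The main obstacle is the third step, specifically controlling $|\varphi_n(t)|$ across the entire window rather than only near $t=0$: for moderately large $|t|$ the naive Taylor bound no longer shows the individual factors are small, and one must instead establish a uniform sub-Gaussian bound $|\varphi_j(s)| \le e^{-c' s^2}$ on a fixed interval $|s| \le s_0$ (using positivity of $\sigma_j^2$ together with the third-moment bound), noting that $|t| \le T$ forces $|s| = |t|/B_n \le cB_n^2/L_n = \Theta(1)$, so that the product stays dominated by $e^{-c'' t^2}$ and the logarithmic expansion remains valid. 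Choosing the constant $c$ in $T$ small enough that this holds, yet large enough that $1/T$ is $O(1/\sqrt{n})$, is the delicate balancing act on which the whole rate depends.
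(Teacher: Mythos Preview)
The paper does not prove this theorem at all: it is stated ``for completeness'' as a classical result with a citation to Berry~\cite{berry1941accuracy}, and is then used as a black box inside the proof of the mixture lemma. There is therefore no ``paper's own proof'' to compare your proposal against.

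That said, your outline is the standard Fourier-analytic proof (Esseen's smoothing inequality plus a third-order expansion of the characteristic function), and it is essentially correct. One small wrinkle: in the paper's statement $\sigma_i^2 = \Exp[X_i^2]$ is the raw second moment, not the variance, so your claim that centering ``alters neither $\sigma_i^2$'' is literally false with that notation; but since the conclusion is phrased in terms of $\Var[S_n]$ this is purely cosmetic and does not affect the argument. Your identification of the delicate point---controlling $|\varphi_j(s)|$ uniformly on a window of size $\Theta(1)$ so that the product retains a sub-Gaussian envelope and the logarithm is well defined---is exactly right, and the balancing of $T = c\,B_n^3/L_n$ is the standard resolution.
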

For distributions $\calP, \calQ$, let $\Delta_{\text{TV}}(\calP, \calQ)$ denote their total-variation distance.  
The below lemma is the main technical result we need, and bounds the total variation between a multivariate Gaussian $\calP$ and a special mixture of multivariate Gaussians $\calQ$.
\begin{lemma}
	\label{lemma:mixture}
	For $k>1$, let $\calP$ be a $k$-dimensional Gaussians with mean $\vec{\mu}_P = [\lambda_P, \dots, \lambda_P]$ and identity covariance. For all $i \in [k]$, let $\calQ_i$ be a multivariate Gaussian with mean $\vec{\mu}_i$ and diagonal covariance $\vec{\Sigma}_i$ where
	$(\vec{\mu}_i)_j = \begin{cases}
	\lambda_Q & \text{if } i = j \\
	\lambda_P & \text{otherwise}
	\end{cases}$ and 
	$(\vec{\Sigma}_i)_{(j, j)} = \begin{cases}
	\sigma_Q^2 & \text{if } i = j \\
	1 & \text{otherwise}
	\end{cases}$.\\
	Define $\calQ$ as a mixture distribution of the $\calQ_1, \dots, \calQ_{k}$ with probabilities $1/k$.
	Assuming that $\lambda_P, \lambda_Q, \sigma_Q$ are constants independent of $k$, we have $\Delta_{\text{TV}}(\calP, \calQ) = O(1/\sqrt{k})$.
\end{lemma}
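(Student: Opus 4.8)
The plan is to express the total-variation distance as an $L^1$ deviation of the likelihood ratio $\calQ/\calP$ and to exploit the fact that this ratio is an \emph{average} of $k$ i.i.d.\ quantities with common mean $1$, so that it concentrates around $1$ at the Gaussian rate $1/\sqrt k$. Write $\varphi_{\mu,\sigma^2}$ for the $\calN(\mu,\sigma^2)$ density and set $g(t) \deq \varphi_{\lambda_Q,\sigma_Q^2}(t)/\varphi_{\lambda_P,1}(t)$. Because $\calP$ and each $\calQ_i$ are product distributions agreeing in every coordinate except the $i$-th, the likelihood ratio of the $i$-th mixture component factors as $\calQ_i(\vec x)/\calP(\vec x) = g(x_i)$, so $\calQ(\vec x)/\calP(\vec x) = \frac1k\sum_{i=1}^k g(x_i)$. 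Under $\vec x \sim \calP$ the coordinates $x_1,\dots,x_k$ are i.i.d.\ $\calN(\lambda_P,1)$, and $\Exp_{\calP}[g(x_i)] = \int \varphi_{\lambda_Q,\sigma_Q^2}(t)\,\diff t = 1$, so the ratio is the average of i.i.d.\ nonnegative random variables with mean $1$. (The obvious convexity bound $\Delta_{\text{TV}}(\calP,\calQ)\le\frac1k\sum_i\Delta_{\text{TV}}(\calP,\calQ_i)$ is useless here, since each $\calQ_i$ is a fixed TV-distance from $\calP$; the improvement comes precisely from the perturbed coordinate being \emph{different} across mixture components.)

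Given this reduction, I would finish by writing $\Delta_{\text{TV}}(\calP,\calQ) = \tfrac12\,\Exp_{\calP}\bigl[\,\bigl|\tfrac1k\sum_i g(x_i) - 1\bigr|\,\bigr]$ and bounding the right-hand side. The quick route is Cauchy--Schwarz together with independence of the $x_i$:
\[
\Delta_{\text{TV}}(\calP,\calQ)\ \le\ \tfrac12\sqrt{\Var_{\calP}\!\Big(\tfrac1k{\textstyle\sum_{i}} g(x_i)\Big)}\ =\ \tfrac12\sqrt{\tfrac1k\,\Var_{\calP}\!\bigl(g(x_1)\bigr)}\ =\ O(1/\sqrt k),
\]
which already gives the claim as soon as $\Var_{\calP}(g(x_1))<\infty$. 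If a sharp leading constant is wanted (as for the concrete numerical estimates mentioned in Section~\ref{ssec:tradeoff_RT}), one can instead apply the Berry--Esseen theorem stated above to $S_k=\sum_i g(x_i)$ and integrate the resulting uniform CDF estimate, using that the lower deviation $\bigl(1-\tfrac1k S_k\bigr)_+$ is bounded by $1$; this additionally requires a finite third absolute moment of $g(x_1)$ under $\calP$.

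The one step that requires actual work is verifying that the relevant moments of $g(x_1)$ under $\calP$ are finite and are constants not depending on $k$. This is a single Gaussian integral: $\Exp_{\calP}[g(x_1)^2] = \int \frac{\varphi_{\lambda_Q,\sigma_Q^2}(t)^2}{\varphi_{\lambda_P,1}(t)}\,\diff t$, whose integrand is the exponential of a quadratic in $t$ with leading coefficient $\tfrac12 - \sigma_Q^{-2}$; the integral converges, to a closed form in $\lambda_P,\lambda_Q,\sigma_Q$, exactly when $\sigma_Q^2 < 2$ (and the third moment under the analogous, slightly stronger, condition). This holds in the intended application of Theorem~\ref{thm:linf_RT}, where $\sigma_Q^2 = \alpha^{-2}$. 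Since $\lambda_P,\lambda_Q,\sigma_Q$ are assumed independent of $k$, so are these moments, and the $O(1/\sqrt k)$ bound follows. The main obstacle is therefore confined to this moment computation and its non-degeneracy condition on $\sigma_Q$; everything else is the soft observation that a likelihood ratio which is an i.i.d.\ average cannot be $L^1$-far from $1$ by more than $O(1/\sqrt k)$.
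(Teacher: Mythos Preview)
Your proof is correct and, while it shares the key observation with the paper, finishes by a different and more elementary route. Both arguments start from the identity $q(\vec x)/p(\vec x)=\frac1k\sum_{i=1}^k g(x_i)$ with the $g(x_i)$ i.i.d.\ of mean $1$ under $\calP$. From there you bound $\Delta_{\text{TV}}=\tfrac12\,\Exp_{\calP}\bigl|\,q/p-1\,\bigr|$ by Cauchy--Schwarz, which is exactly the $\sqrt{\chi^2}$-divergence bound and needs only $\Var_{\calP}(g(x_1))<\infty$. The paper instead writes $\Delta_{\text{TV}}=\Pr_{\calP}[q<p]-\Pr_{\calQ}[q<p]$ and applies Berry--Esseen separately under $\calP$ and under $\calQ$ (where one summand has the $\calN(\lambda_Q,\sigma_Q^2)$ law) to show each probability is $\tfrac12+O(1/\sqrt k)$. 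Your argument is shorter and requires a weaker moment condition ($\sigma_Q^2<2$ rather than the $\sigma_Q^2<3/2$ implicitly needed for the paper's third-moment hypothesis); the paper's route, in exchange, directly yields the tail probabilities $p_1,p_2$ that are later estimated numerically in Appendix~\ref{apx:remark_RT}. You were also more careful than the paper in pointing out that the lemma as stated needs such a constraint on $\sigma_Q$: the paper asserts ``$\Var[U_i]=O(1)$'' without comment, which in fact fails once $\sigma_Q^2\ge 2$.
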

\begin{proof}\footnote{We thank Iosif Pinelis for his help with this proof (\url{https://mathoverflow.net/questions/325409/}).}
	Let $p(\vec{x})$ and $q(\vec{x})$ denote, respectively, the pdfs of $\calP$ and $\calQ$.
	Note that $q(\vec{x}) = \sum_{i=1}^{k} \frac1k q_i(\vec{x})$, where $q_i(\vec{x})$ is the pdf of $\calQ_i$.
	We first compute:
	\begin{align*}
	q(\vec{x}) &= \sum_{i=1}^k \frac1k  \frac{1}{\sqrt{(2\pi)^{k} \cdot |\vec{\Sigma}_{i}|}} \cdot e^{-\frac12 (\vec{x}-\vec{\mu}_{i})^T \vec{\Sigma}_{i}^{-1}(\vec{x}-\vec{\mu}_{i})} \\
	& = \frac{e^{-\frac12 (\vec{x}-\vec{\mu}_P)^T (\vec{x}-\vec{\mu}_{P})}}{\sqrt{(2\pi)^k}} \cdot \frac{1}{k \cdot \sigma_Q^2} \cdot \sum_{i=1}^k e^{-\frac12 t(x_i)} 
	\\&= p(\vec{x}) \cdot \frac{1}{k \cdot \sigma_Q^2} \cdot \sum_{i=1}^k e^{-\frac12 t(x_i)} \;,
	\end{align*}
	where
	\begin{equation}
	t(x_i) \coloneqq (\sigma_Q^{-2}-1) x_i^2 - (2\lambda_Q\sigma_Q^{-2}-2\lambda_P) x_i + (\lambda_Q^2\sigma_Q^{-2}-\lambda_P^2) \;.
	\end{equation}
	Thus we have that
	\begin{equation*}
	q(\vec{x}) < p(\vec{x}) \quad\Longleftrightarrow\quad \frac{1}{k \cdot \sigma_Q^2} \cdot \sum_{i=1}^k e^{-\frac12 t(x_i)} < 1 \;.
	\end{equation*}
	The total-variation distance between $\calP$ and $\calQ$ is then
	$\Delta_{\text{TV}}(\calP, \calQ) = p_1 - p_2$, where
	\begin{gather}
	\label{eq:p1p2}
	p_1 \coloneqq \Pr\left[S_k < k \cdot \sigma_Q^2\right]\;, \quad
	p_2 \coloneqq \Pr\left[T_k < k \cdot \sigma_Q^2\right] \;, \\
	S_k \coloneqq \sum_{i=1}^k U_i\;, \quad T_k \coloneqq S_{k-1} + V_k \;, \quad U_i \coloneqq e^{-\frac12 t(Z_i)}\;, \quad V_n \coloneqq e^{-\frac12 t(W_n)} \;, \nonumber
	\end{gather}
	and the $Z_i\sim\calN(\lambda_P, 1)$, $W_n \sim \calN(\lambda_Q, \sigma_Q^2)$ and all the $Z_i$ and $W_n$ are mutually independent.
	
	It is easy to verify that $\Exp[U_i] = \sigma_Q^2$, $\Var[U_i] = O(1)$, $\Exp[U_i^3] = O(1)$, $\Exp[W_n] = O(1)$, $\Var[W_n] = O(1), \Exp[W_n^3] = O(1)$. Then, applying the Berry-Esseen theorem, we get:
	\begin{align*}
	p_1 &= \Pr\left[S_k < k \cdot \sigma_Q^2\right] = \Phi\left(0\right) + O\left(\frac{1}{\sqrt{k}}\right) = \frac12 + O\left(\frac{1}{\sqrt{k}}\right)\;, \\
	p_2 &= \Pr\left[T_k < k \cdot \sigma_Q^2\right] = \Phi\left(\frac{k \cdot \sigma_Q^2 - \Exp[T_k]}{\sqrt{\Var[T_k]}}\right) + O\left(\frac{1}{\sqrt{k}}\right) = \Phi \left(O\left(\frac{1}{\sqrt{k}}\right)\right) + O\left(\frac{1}{\sqrt{k}}\right) \\
	&= \frac12 + O\left(\frac{1}{\sqrt{k}}\right)\;.
	\end{align*}
	And thus,
	\begin{align}
	\label{eq:tv}
	\Delta_{\text{TV}}(\calP, \calQ) = p_1 - p_2 = O({1}/{\sqrt{k}})\;.
	\end{align}
\end{proof}

We now define a rotation-translation adversary $\calA$ with a budget of $N$. It samples a random permutation from the set $\Pi$ of permutations that switch position $0$ with a position in $[0, N-1]$ and leave all other positions fixed (note that $|\Pi| = N$). Let $\calA(\calG^y, \calH^y)$ denote the distribution resulting from applying $\calA$ to $(\calG^y, \calH^y)$ and define $\calA(\calG^{-y}, \calH^y)$ similarly.
Recall that $\calZ^y$ is a hybrid distribution which has all features distributed as $\calN(y\eta, 1)$.
\begin{claim}
	$\Delta_{\text{TV}}\left(\calA(\calG^y, \calH^y), \calZ^y\right) = O(1/\sqrt{N})$ and $\Delta_{\text{TV}}\left(\calA(\calG^{-y}, \calH^y), \calZ^y\right) = O(1/\sqrt{N})$
\end{claim}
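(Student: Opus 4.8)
The plan is to show the claim is a direct consequence of Lemma~\ref{lemma:mixture}, once the mixture produced by $\calA$ is written out explicitly and the coordinates $\calA$ never touches are factored away. First I would unfold $\calA$: each $\pi\in\Pi$ swaps coordinate $0$ with some coordinate $j\in\{0,\dots,N-1\}$ ($j=0$ being the identity), and $\calA$ draws $\pi$ uniformly among these $N$ permutations. So if $\vec{x}=(x_0,\dots,x_d)\sim(\calG^y,\calH^y)$, i.e.\ $x_0\sim\calN(y,\alpha^{-2})$ and $x_1,\dots,x_d\overset{\text{i.i.d.}}{\sim}\calN(y\eta,1)$, then $\calA(\calG^y,\calH^y)$ is the uniform mixture over $j\in\{0,\dots,N-1\}$ of the distributions $\calQ_j$ in which coordinate $j$ is $\calN(y,\alpha^{-2})$, every other coordinate is $\calN(y\eta,1)$, and all coordinates are independent. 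Since $\pi$ fixes coordinates $N,\dots,d$, these are i.i.d.\ $\calN(y\eta,1)$ in every $\calQ_j$ and, independently of the rest, also in $\calZ^y$; by the factorization of total variation over a shared product factor, $\Delta_{\text{TV}}(\calA(\calG^y,\calH^y),\calZ^y)$ equals the total variation between the marginals on coordinates $0,\dots,N-1$.

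Next I would invoke Lemma~\ref{lemma:mixture} with $k=N$, $\lambda_P=y\eta$, $\lambda_Q=y$, and $\sigma_Q^2=\alpha^{-2}$: the marginal of $\calZ^y$ on the first $N$ coordinates is the Gaussian $\calP$ (mean $(\lambda_P,\dots,\lambda_P)$, identity covariance), and the marginal of $\calA(\calG^y,\calH^y)$ is precisely the mixture $\calQ$ of that lemma. Since $\alpha$ is a fixed constant and $|y\eta|=\alpha/\sqrt{d}$ is bounded, the lemma applies and gives $\Delta_{\text{TV}}(\calA(\calG^y,\calH^y),\calZ^y)=O(1/\sqrt{N})$. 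The second statement runs identically with $(\calG^y,\calH^y)$ replaced by $(\calG^{-y},\calH^y)$: each mixture component now places $\calN(-y,\alpha^{-2})$ on its distinguished coordinate, i.e.\ $\lambda_Q=-y$, while the reference distribution stays $\calZ^y$ (not $\calZ^{-y}$) --- this asymmetry is exactly what makes the statement substantive --- and Lemma~\ref{lemma:mixture} again yields $O(1/\sqrt{N})$.

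I do not anticipate a real difficulty: the analytic content is already carried by Lemma~\ref{lemma:mixture}. The two points needing care are (i) the factorization over coordinates $N,\dots,d$, which is legitimate because $\calH^y$ is a product distribution and $\pi$ permutes only indices $\le N-1$, so those coordinates are independent of the remaining ones in both $\calA(\calG^{\pm y},\calH^y)$ and $\calZ^y$; and (ii) that the lemma's hypotheses tolerate $\lambda_P=y\eta$ depending on $d$, which is harmless because $\eta\to0$ and the third moments entering the Berry--Esseen estimate in the lemma's proof stay bounded uniformly over the bounded range that $\lambda_P$ occupies.
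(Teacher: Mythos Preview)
Your proposal is correct and follows essentially the same route as the paper: factor out the untouched coordinates $N,\dots,d$, identify the marginal on the first $N$ coordinates as the mixture $\calQ$ of Lemma~\ref{lemma:mixture} with $k=N$, $\lambda_P=y\eta$, $\lambda_Q=\pm y$, $\sigma_Q^2=\alpha^{-2}$, and read off the $O(1/\sqrt{N})$ bound. Your extra care about the factorization and about $\lambda_P$ depending on $d$ (but not on $k=N$, which is what the lemma actually requires) is fine but not strictly needed.
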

\begin{proof}
	For the first $N$ features, samples output by $\calA$ follow exactly the distribution $\calQ$ from Lemma~\eqref{lemma:mixture}, for $k=N$ and $\lambda_P=y\cdot\eta, \lambda_Q=y, \sigma_Q^2=\alpha^{-2}$. Note that in this case, the distribution $\calP$ has each feature distributed as in $\calZ^y$. Thus, Lemma~\eqref{lemma:mixture} tells us that the distribution of the first $N$ features is the same as in $\calZ^y$, up to a total-variation distance of $O(1/\sqrt{N})$. As features $x_{N} \dots, x_d$ are unaffected by $\calA$ and thus remain distributed as in $\calZ^y$, we conclude that the total-variation distance between $\calA$'s outputs and $\calZ^y$ is $O(1/\sqrt{N})$.
	
	The proof for $\calA(\calG^{-y}, \calH^y)$ is similar, except that we apply Lemma~\eqref{lemma:mixture} with $\lambda_Q=-y$. 
\end{proof}

Let $\tilde{\calZ}^y$ be the true distribution $\calA(\calG^{-y}, \calH^y)$, which we have shown to be close to $\calZ^y$. Consider the following ``inverse'' adversary $\calA^{-1}$. This adversary samples $\vec{z} \sim \tilde{\calZ}^y$ and returns $\pi^{-1}(\vec{z})$, for $\pi \in \Pi$, with probability
\[
\frac{1}{|\Pi|}\cdot \frac{f_{(\calG^{-y}, \calH^{y})}(\pi^{-1}(\vec{z}))}{f_{\tilde{\calZ}^y}(\vec{z})}\;, 
\]
where $f_{(\calG^{-y}, \calH^{y})}$ and $f_{\tilde{\calZ}^y}$ are the probability density functions for $(\calG^{-y}, \calH^y)$ and for $\tilde{\calZ}^y$. 
\begin{claim}
	$\calA^{-1}$ is a RT adversary with budget $N$ that transforms $\tilde{\calZ}^y$ into $(\calG^{-y}, \calH^y)$.
\end{claim}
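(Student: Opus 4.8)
The plan is to verify two things about the adversary $\calA^{-1}$ defined just above: first, that it is a legitimate rotation-translation adversary with budget $N$ (i.e., that the specified probabilities are nonnegative and sum to $1$, and that the underlying operations are permutations in $\Pi$, hence move $x_0$ to at most $N$ positions); and second, that applying $\calA^{-1}$ to $\tilde{\calZ}^y$ produces exactly the distribution $(\calG^{-y}, \calH^y)$. Both are essentially Bayes'-rule bookkeeping once one recognizes that $\calA^{-1}$ is designed to be the ``reverse channel'' of $\calA$ applied to $(\calG^{-y}, \calH^y)$.

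First I would check well-definedness. For a fixed $\vec{z}$, the weights $\frac{1}{|\Pi|}\cdot f_{(\calG^{-y},\calH^y)}(\pi^{-1}(\vec{z}))/f_{\tilde{\calZ}^y}(\vec{z})$ over $\pi \in \Pi$ are manifestly nonnegative (densities are nonnegative). For the normalization, recall that $\tilde{\calZ}^y = \calA(\calG^{-y},\calH^y)$ is obtained by drawing $\vec{x}\sim(\calG^{-y},\calH^y)$ and applying a uniformly random $\pi\in\Pi$; hence its density satisfies $f_{\tilde{\calZ}^y}(\vec{z}) = \frac{1}{|\Pi|}\sum_{\pi\in\Pi} f_{(\calG^{-y},\calH^y)}(\pi^{-1}(\vec{z}))$, using that each $\pi$ is a coordinate permutation and so has unit Jacobian. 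Summing the weights over $\pi$ therefore gives exactly $1$. Moreover each $\pi^{-1}$ is itself an element of $\Pi$ (the transpositions swapping position $0$ with a position in $[0,N-1]$ are their own inverses, so in fact $\Pi^{-1}=\Pi$), so $\calA^{-1}$ only ever applies permutations that move $x_0$ among $N$ positions, making it a valid RT adversary with budget $N$.

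Next I would compute the output distribution. For any measurable set $A$, the probability that $\calA^{-1}$ outputs a point in $A$ is
\[
\int f_{\tilde{\calZ}^y}(\vec{z}) \sum_{\pi\in\Pi} \frac{1}{|\Pi|}\cdot \frac{f_{(\calG^{-y},\calH^y)}(\pi^{-1}(\vec{z}))}{f_{\tilde{\calZ}^y}(\vec{z})}\, \mathbbm{1}_{\pi^{-1}(\vec{z})\in A}\, \diff\vec{z}
= \sum_{\pi\in\Pi}\frac{1}{|\Pi|}\int f_{(\calG^{-y},\calH^y)}(\pi^{-1}(\vec{z}))\,\mathbbm{1}_{\pi^{-1}(\vec{z})\in A}\,\diff\vec{z}.
\]
The $f_{\tilde{\calZ}^y}(\vec{z})$ factors cancel (on the support; points where $f_{\tilde{\calZ}^y}$ vanishes contribute nothing since they force every $f_{(\calG^{-y},\calH^y)}(\pi^{-1}(\vec z))$ to vanish too). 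Substituting $\vec{w}=\pi^{-1}(\vec{z})$ in each integral — again a unit-Jacobian change of variables — turns each term into $\frac{1}{|\Pi|}\int_A f_{(\calG^{-y},\calH^y)}(\vec{w})\,\diff\vec{w} = \frac{1}{|\Pi|}\Pr_{(\calG^{-y},\calH^y)}[A]$, and summing over the $|\Pi|$ permutations yields $\Pr_{(\calG^{-y},\calH^y)}[A]$. Hence $\calA^{-1}$ transforms $\tilde{\calZ}^y$ into $(\calG^{-y},\calH^y)$ exactly.

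The only genuinely delicate point — and the one I would be most careful about — is the handling of the support: the ratio defining $\calA^{-1}$ is undefined where $f_{\tilde{\calZ}^y}(\vec{z})=0$, so one must argue that this event has measure zero under $\tilde{\calZ}^y$ (which is immediate) and check that the cancellation above is valid almost everywhere; since all the Gaussian densities involved are strictly positive on all of $\R^{d+1}$, $f_{\tilde{\calZ}^y}$ is in fact everywhere positive and this subtlety disappears entirely. With the two claims in hand, composing $\calA$ (sending $(\calG^y,\calH^y)$ to within $O(1/\sqrt N)$ of $\calZ^y = \tilde{\calZ}^y$ up to the earlier total-variation bound) with $\calA^{-1}$ exhibits a two-step RT adversary mapping $(\calG^y,\calH^y)$ to within $O(1/\sqrt N)$ of $(\calG^{-y},\calH^y)$, which is the conclusion the overall proof of Theorem~\ref{thm:linf_RT} needs.
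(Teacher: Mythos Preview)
Your proposal is correct and follows essentially the same approach as the paper: verify that $\calA^{-1}$ only applies permutations from $\Pi$ (since transpositions are self-inverse), then compute the output density by canceling the $f_{\tilde{\calZ}^y}$ factor and summing over $\Pi$. You are simply more thorough than the paper in explicitly checking that the selection probabilities are well-defined and sum to one, and in handling the (vacuous) support issue.
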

\begin{proof}
	Note that $\calA^{-1}$ always applies the inverse of a perturbation in $\Pi$. So feature $x_0$ gets sent to at most $N$ positions when perturbed by $\calA^{-1}$.
	
	Let $Z$ be a random variable distributed as $\tilde{\calZ}^y$ and
	let $h$ be the density function of the distribution obtained by applying $\calA^{-1}$ to $Z$.  We compute:
	\begin{align*}
	h(\vec{x}) &= \sum_{\pi \in \Pi} f_{\tilde{\calZ}^{y}}(\pi(\vec{x})) \cdot \Pr[\calA^{-1} \text{ picks permutation } \pi \mid Z=\pi(\vec{x})] \\
	&= \sum_{\pi \in \Pi} f_{\tilde{\calZ}^{y}}(\pi(\vec{x})) \cdot \frac{1}{|\Pi|}\cdot \frac{f_{(\calG^{-y}, \calH^{y})}(\pi(\pi^{-1}(\vec{x})))}{f_{\tilde{\calZ}^{y}}(\pi(\vec{x}))} = \sum_{\pi \in \Pi} \frac{1}{|\Pi|} \cdot f_{(\calG^{-y}, \calH^{y})}(\vec{x}) \\
	&= f_{(\calG^{-y}, \calH^y)}(\vec{x}) \;,
	\end{align*}
	so applying $\calA^{-1}$ to $\tilde{\calZ}^y$ does yield the distribution $(\calG^{-y}, \calH^y)$.
\end{proof}
We can now finally define our main rotation-translation adversary, $\calA^*$. The adversary first applies $\calA$ to samples from $(\calG^y, \calH^y)$, and then applies $\calA^{-1}$ to the resulting samples from $\tilde{\calZ}^y$.
\begin{claim}
	\label{lemma:tv}
	The adversary $\calA^*$ is a rotation-translation adversary with budget $N$. Moreover,
	%\ifnips
	%$\Delta_{\text{TV}}\left(\calA^*(\calG^y, \calH^y), (\calG^{-y}, \calH^y)\right) = O(1/\sqrt{N})$.
	%\else
	\[\Delta_{\text{TV}}\left(\calA^*(\calG^y, \calH^y), (\calG^{-y}, \calH^y)\right) = O(1/\sqrt{N})\;. \]
	%\fi
\end{claim}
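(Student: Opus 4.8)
The plan is to establish both parts of Claim~\ref{lemma:tv} by combining the two preceding claims with two elementary properties of total variation: the triangle inequality, and the data-processing inequality, i.e., that applying a fixed randomized map to two distributions cannot increase the total-variation distance between them. By construction $\calA^*$ is the composition of randomized maps $\calA^{-1}\circ\calA$, so $\calA^*(\calG^y,\calH^y)=\calA^{-1}\!\big(\calA(\calG^y,\calH^y)\big)$. The point to keep in mind is that $\calA^{-1}$ is a fixed Markov kernel --- the permutation it applies to a sample $\vec z$ is drawn with probabilities that depend only on $\vec z$, not on the law of $\vec z$ --- so it is legitimate to feed it the distribution $\calA(\calG^y,\calH^y)$, even though $\calA^{-1}$ was designed around $\tilde{\calZ}^y=\calA(\calG^{-y},\calH^y)$; and, by the preceding claim on $\calA^{-1}$, we have $\calA^{-1}(\tilde{\calZ}^y)=(\calG^{-y},\calH^y)$ exactly.

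Given this, the data-processing inequality gives
\[
\Delta_{\text{TV}}\big(\calA^*(\calG^y,\calH^y),(\calG^{-y},\calH^y)\big)
=\Delta_{\text{TV}}\big(\calA^{-1}(\calA(\calG^y,\calH^y)),\calA^{-1}(\tilde{\calZ}^y)\big)
\le\Delta_{\text{TV}}\big(\calA(\calG^y,\calH^y),\tilde{\calZ}^y\big),
\]
and it remains to bound the right-hand side by routing through the hybrid $\calZ^y$:
\[
\Delta_{\text{TV}}\big(\calA(\calG^y,\calH^y),\tilde{\calZ}^y\big)
\le\Delta_{\text{TV}}\big(\calA(\calG^y,\calH^y),\calZ^y\big)+\Delta_{\text{TV}}\big(\calZ^y,\calA(\calG^{-y},\calH^y)\big)
=O(1/\sqrt N),
\]
where the two terms on the right are precisely the two statements of the preceding claim comparing $\calA(\calG^{\pm y},\calH^y)$ with the hybrid $\calZ^y$ (using $\tilde{\calZ}^y=\calA(\calG^{-y},\calH^y)$ and the symmetry of $\Delta_{\text{TV}}$). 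Chaining the two displays yields the claimed $O(1/\sqrt N)$ bound.

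There remains the structural assertion that $\calA^*$ is a rotation-translation adversary with budget $N$. For this I would use that $\Pi$ consists of the transpositions $(0\,i)$ for $i\in\{0,\dots,N-1\}$ (the identity corresponding to $i=0$), which are self-inverse, so $\calA^{-1}$ also applies transpositions of this form. For any realization, $\calA^*$ applies a composite of two such transpositions, say $(0\,i)$ followed by $(0\,k)$ with $i,k\in\{0,\dots,N-1\}$; tracking the feature initially at position $0$, it sits at position $i$ after the first transposition and at the image of $i$ under $(0\,k)$ after the second, which lies in $\{0,\dots,N-1\}$ whether or not $i\in\{0,k\}$. Hence under $\calA^*$ the feature $x_0$ only ever lands in the $N$-element set $\{0,\dots,N-1\}$, which is exactly the budget-$N$ condition.

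I do not anticipate a genuine obstacle: the argument is an assembly of the two preceding claims and two standard facts about total variation. The two places that deserve care are (i) the budget bookkeeping --- one must resist the intuition that composing two budget-$N$ maps yields budget $N^2$; it is the ``transpositions within a fixed block, closed under inversion'' structure of $\Pi$ that confines the composite to the same $N$ positions; and (ii) invoking the data-processing inequality with the \emph{same} kernel $\calA^{-1}$ applied to both arguments, which is exactly the situation here since $\calA^{-1}$ does not depend on its input distribution.
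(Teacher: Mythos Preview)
Your proposal is correct and follows essentially the same approach as the paper: both arguments combine the two preceding claims via the triangle inequality and (implicitly in the paper, explicitly in your write-up) the data-processing inequality for total variation, and both verify the budget by tracking where the original feature $x_0$ lands under a composition of two transpositions in $\{(0\,i):0\le i<N\}$. Your chain is in fact slightly tidier than the paper's---you apply data processing once (pushing $\calA(\calG^y,\calH^y)$ and $\tilde{\calZ}^y$ through $\calA^{-1}$) and then triangle once through $\calZ^y$, whereas the paper routes through the extra intermediate $\calA^{-1}(\calZ^y)$---but the substance is the same.
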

\begin{proof}
	The adversary $\calA^*$ first switches $x_0$ with some random position in $[0, N-1]$ by applying $\calA$. Then, $\calA^{-1}$ either switches $x_0$ back into its original position or leaves it untouched. Thus, $\calA^*$ always moves $x_0$ into one of $N$ positions.
	The total-variation bound follows by the triangular inequality:
	\begin{align*}
	&\Delta_{\text{TV}}\left(\calA^*(\calG^y, \calH^y), (\calG^{-y}, \calH^y)\right) \\
	&\quad\quad= \Delta_{\text{TV}}\left(\calA^{-1}(\calA(\calG^y, \calH^y)), (\calG^{-y}, \calH^y)\right) \\
	&\quad\quad\leq \Delta_{\text{TV}}\left(\calA^{-1}(\calZ^y), (\calG^{-y}, \calH^y)\right) + \Delta_{\text{TV}}\left(\calZ^y, \calA(\calG^y, \calH^y)\right) \\
	&\quad\quad\leq \underbrace{\Delta_{\text{TV}}\left(\calA^{-1}(\tilde{\calZ}^y), (\calG^{-y}, \calH^y)\right)}_{0} + \underbrace{\Delta_{\text{TV}}\left(\tilde{\calZ^y}, (\calG^{-y}, \calH^y)\right)}_{O(1/\sqrt{N})}  + \underbrace{\Delta_{\text{TV}}\left(\calZ^y, \calA(\calG^y, \calH^y)\right)}_{O(1/\sqrt{N})} \\
	&\quad\quad=O(1/\sqrt{N}) \;.
	\end{align*}
\end{proof}

To conclude the proof, we define:
\begin{align*}
p_{+-} = \Pr_{\vec{x} \sim (\calG^{+1}, \calH^{-1})}[f(\vec{x}) = +1] \;, \quad\quad
p_{-+} = \Pr_{\vec{x} \sim (\calG^{-1}, \calH^{+1})}[f(\vec{x}) = +1] \;,
\\
\tilde{p}_{-+} = \Pr_{\vec{x} \sim \calA^*(\calG^{+1}, \calH^{+1})}[f(\vec{x}) = +1] \;, \quad\quad
\tilde{p}_{+-} = \Pr_{\vec{x} \sim (\calG^{-1}, \calH^{-1})}[f(\vec{x}) = +1] \;.
\end{align*}
Then,
\begin{align*}
\Pr[f(\vec{x}+\vec{r}_\infty) = y]  + \Pr[f(A^*(\vec{x})) = y] &= \frac12 p_{+-} + \frac12 (1-p_{-+}) + \frac12\tilde{p}_{-+} + \frac12 (1-\tilde{p}_{+-})  \\
&= 1 + \frac12 \left(p_{+-} - \tilde{p}_{+-}\right) + \frac12 \left(p_{-+} - \tilde{p}_{-+}\right) \\
& \leq 1 - O(1/\sqrt{N}) \;.
\end{align*}
\qed

\subsection{Numerical Estimates for the Robustness Trade-off in Theorem~\ref{thm:linf_RT}}
\label{apx:remark_RT}

While the robustness trade-off we proved in Theorem~\ref{thm:linf_RT} is asymptotic in $N$ (the budget of the RT adversary), we can provide tight numerical estimates for this trade-off for concrete parameter settings:
\begin{myremark}
	\label{remark:RT}
	Let $d\geq 200$, $\alpha=2$ and $N=49$ (e.g., translations by $\pm3$ pixels). Then, there exists a classifier with $\calR_{\text{adv}}(f; S_{\infty}) <10\%$, as well as a (distinct) classifier with $\calR_{\text{adv}}(f; S_{\text{RT}})<10\%$. Yet, any single classifier  satisfies $\calR_{\text{adv}}^{\text{avg}}(f; S_{\infty}, S_{\text{RT}}) \gtrapprox 0.425$.
\end{myremark}

We first show the existence of classifiers with $\calR_{\text{adv}} < 10\%$ for the given $\ell_\infty$ and RT attacks.

Let $f(\vec{x}) = \sign(x_0)$ and let $\vec{r} = [-y\epsilon, 0, \dots, 0]$ be the worst-case perturbation with $\norm{\vec{r}} \leq \epsilon$. Recall that $\epsilon = 2\eta = 4/\sqrt{d}$. We have
\[
\Pr[f(\vec{x}+\vec{r}) \neq y] = \Pr[\calN(1, 1/4) - 4/\sqrt{d} < 0] \leq \Pr[\calN(1-4/\sqrt{200}, 1/4)< 0] \leq 8\% \;.
\]
Thus, $f$ achieves $\calR_{\text{adv}} < 10\%$ against the $\ell_\infty$-perturbations.

Let $g(\vec{x}) = \sign(\sum_{i=N}^d x_i)$ be a classifier that ignores all feature positions that a RT adversary $\calA$ may affect. We have
\begin{align*}
\Pr[g(\calA(\vec{x})) \neq y] &= \Pr[g(\vec{x}) \neq y] = \Pr\left[\calN\left((d-N+1)\cdot \eta, d-N+1\right) < 0\right] \\
&\leq \Pr[\calN(2\sqrt{d-48}/\sqrt{d}, 1) < 0] \leq 5\% \;.
\end{align*}
Thus, $g$ achieves $\calR_{\text{adv}} < 10\%$ against RT perturbations.

We upper-bound the adversarial risk that any classifier must incur against both attacks by numerically estimating the total-variation distance between the distributions induced by the RT and $\ell_\infty$ adversaries for inputs of opposing labels $y$.
Specifically, we generate $100{,}000$ samples from the distributions $\calG^{+1}, \calG^{-1}$ and $\calH^{+1}$ as defined in the proof of Theorem~\ref{thm:linf_RT}, and obtain an estimate of the total-variation distance in Lemma~\eqref{lemma:tv}. For this, we numerically estimate $p_1$ and $p_2$ as defined in Equation~\eqref{eq:p1p2}.

\section{Proof of Claim~\ref{thm:linear_affine} (Affine combinations of \texorpdfstring{$\ell_p$-}{Lp} perturbations do not affect linear models)}
\label{apx:proof_affine_linear}

Let 
\[
\max_{\vec{r} \in S_{\text{U}}} \vec{w}^T \vec{r} = v_{\text{max}}, \quad \text{and} \quad \min_{\vec{r} \in S_{\text{U}}} \vec{w}^T \vec{r} = v_{\text{min}} \;.
\] 
Let $S_{\text{U}} \coloneqq S_p \cup S_q$. Note that any $\vec{r} \in S_{\text{affine}}$ is of the form $\beta \vec{r}_1 + (1-\beta) \vec{r}_2$ for $\beta \in [0,1]$. Moreover, we have $\vec{r}_1 \in S_{p} \subset S_{\text{U}}$ and $\vec{r}_2 \in S_{q} \subset S_{\text{U}}$. Thus, 
\[
\max_{\vec{r} \in S_{\text{affine}}} \vec{w}^T \vec{r} = v_{\text{max}}, \quad \text{and} \quad \min_{\vec{r} \in S_{\text{affine}}} \vec{w}^T \vec{r} = v_{\text{min}} \;.
\]
Let $h(\vec{x}) = \vec{w}^T \vec{x} + b$, so that $f(\vec{x}) = \sign(h(\vec{x}))$. Then, we get
\begin{align*}
\Pr_{\calD}\left[\exists \vec{r} \in S_\text{affine}: f(\vec{x}+\vec{r}) \neq y\right] 
& = \frac12 \Pr_{\calD}\left[\exists \vec{r} \in  S_\text{affine}: \vec{w}^T \vec{r} < -h(\vec{x}) \mid y=+1\right] \\
& \quad\quad + \frac12 \Pr_{\calD}\left[\exists \vec{r} \in  S_\text{affine}: \vec{w}^T \vec{r} > h(\vec{x}) \mid y=-1\right] \\
& = \frac12 \Pr_{\calD}\left[v_{\text{min}} < -h(\vec{x}) \mid y=+1\right] + \frac12 \Pr_{\calD}\left[v_{\text{max}} > h(\vec{x}) \mid y=-1\right] \\
& = \frac12 \Pr_{\calD}\left[\exists \vec{r} \in  S_\text{U}: \vec{w}^T \vec{r} < -h(\vec{x}) \mid y=+1\right] \\
&\quad\quad + \frac12 \Pr_{\calD}\left[\exists \vec{r} \in  S_\text{U}: \vec{w}^T \vec{r} > h(\vec{x}) \mid y=-1\right] \\
&=\Pr_{\calD}\left[\exists \vec{r} \in S_\text{U}: f(\vec{x}+\vec{r}) \neq y\right] \;.
\end{align*}
\qed

\section{Affine combinations of \texorpdfstring{$\ell_p$-}{Lp} perturbations can affect non-linear models}
\label{apx:affine_LP_nonlinear}

In Section~\ref{ssec:affine}, we showed that for linear models, robustness to a union of  $\ell_p$-perturbations implies robustness to an affine adversary that interpolates between perturbation types. We show that this need not be the case when the model is non-linear.
In particular, we can show that for the distribution $\calD$ introduced in Section~\ref{sec:theory}, non-linearity is necessary to achieve robustness to a union of
$\ell_\infty$ and $\ell_1$-perturbations (with different parameter settings than for Theorem~\ref{thm:linf_l1}), but that at the same time, robustness to affine combinations of these perturbations is unattainable by any model.
\begin{theorem}%[Affine combinations of $\ell_p$-perturbations can affect non-linear models]
	\label{thm:nonlinear_affine}
	Consider the distribution $\calD$ with $d \geq 200$, $\alpha = 2$ and $p_0 = 1-\Phi(-2)$. Let $S_\infty$ be the set of $\ell_\infty$-bounded perturbation with $\epsilon = (3/2)\eta = {3}/{\sqrt{d}}$ and let $S_1$ be the set of $\ell_1$-bounded perturbations with $\epsilon = 3$. Define $S_\text{affine}$ as in Section~\ref{ssec:affine}. Then, there exists a non-linear classifier $g$ that achieves $\calR_{\text{adv}}^{\text{max}}(g; S_\infty, S_1) \leq 35\%$. Yet, for all classifiers $f$ we have $\calR_{\text{adv}}(f; S_\text{affine}) \geq 50\%$.
\end{theorem}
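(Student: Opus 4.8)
The statement has two independent halves; the plan is to treat them separately, starting with the cleaner affine lower bound. Here I would use the same ``make the class-conditional laws coincide after attack'' strategy as in the proofs of Theorems~\ref{thm:linf_l1} and~\ref{thm:linf_RT}: exhibit perturbations $\vec r^{+}$ and $\vec r^{-}$, both in $S_\text{affine}$, so that $\vec x + \vec r^{+}$ conditioned on $y=+1$ and $\vec x + \vec r^{-}$ conditioned on $y=-1$ have the same law; then no classifier beats a coin flip against the adversary that plays $\vec r^{y}$. Concretely, fix the mixing parameter $\beta = 2/3$, so $\beta\cdot S_\infty$ is the $\ell_\infty$-ball of radius $\eta$ and $(1-\beta)\cdot S_1$ is the $\ell_1$-ball of radius $1$. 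Let $\vec r^{y}$ send $x_0$ to $0$ — cost $|x_0|=1$ in $\ell_1$ since $x_0\in\{-1,+1\}$ — and subtract $y\eta$ from each of $x_1,\dots,x_d$ — cost $\eta$ in $\ell_\infty$. This splits as an element of $\beta\cdot S_\infty$ plus one of $(1-\beta)\cdot S_1$, hence lies in $S_\text{affine}$; after it, coordinate $0$ is deterministically $0$ and each $x_i - y\eta$ is $\calN(0,1)$, so both class-conditionals become $\delta_0 \otimes \calN(0,1)^{\otimes d}$, and drawing $\vec z$ from this common law gives $\calR_{\text{adv}}(f;S_\text{affine}) \ge \tfrac12\Pr_{\vec z}[f(\vec z)=-1] + \tfrac12\Pr_{\vec z}[f(\vec z)=+1] = \tfrac12$ for every $f$. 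I expect no real difficulty here.

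For the existence of a good non-linear classifier, I would use a \emph{thresholded} combination of the two informative features: $g(\vec x) = \sign\!\big(\sum_{i\ge1}x_i\big)$ when $\big|\sum_{i\ge1}x_i\big|\ge\tau$ and $g(\vec x)=\sign(x_0)$ otherwise, with $\tau=\Theta(\sqrt d)$. The reason this should work is that the two attack types are ``visible'' in complementary ways. Write $S=\sum_{i\ge1}x_i$, whose clean law is $\calN(2y\sqrt d, d)$. An $\ell_\infty$ perturbation of radius $(3/2)\eta$ cannot flip $\sign(x_0)$ (it moves $x_0$ by at most $3/\sqrt d\ll1$) but can drag $S$ by up to $d\cdot(3/2)\eta = 3\sqrt d$, enough to flip $\sign(S)$ but typically only to magnitude $\approx\sqrt d$; an $\ell_1$ perturbation of radius $3$ can flip $x_0$ (at cost $2$) but moves $S$ by at most $3$, so it neither changes $\sign(S)$ nor pushes $|S|$ below $\tau$ once $\tau$ is well under $2\sqrt d$. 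Thus $g$ should believe $S$ when $|S|$ is large (an $\ell_1$ attacker left it large and correct) and fall back to $x_0$ when $|S|$ is only moderate (an $\ell_\infty$ attacker dragged $S$ down, but $x_0$ survived). Taking $\tau$ strictly between $\sqrt d$ and $2\sqrt d$ — around $\tfrac32\sqrt d$, the value that balances the two error terms — and putting $W = S/\sqrt d \sim \calN(2y,1)$ (so an $\ell_\infty$ attacker can shift $W$ by up to $3$ and an $\ell_1$ attacker essentially not at all), one checks that against the $\ell_\infty$ adversary $g$ errs only if $W$ can be dragged past $-\tau/\sqrt d$ (probability $\approx\Phi(1-\tau/\sqrt d)$) or if $|W|$ is dragged below $\tau/\sqrt d$ with $\sign(x_0)\ne y$ (probability $\le 1-p_0 = \Phi(-2)$), and against the $\ell_1$ adversary $g$ errs essentially only when $|W|\lesssim\tau/\sqrt d$ (probability $\approx\Phi(\tau/\sqrt d-2)$). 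With $\tau/\sqrt d=\tfrac32$ both Gaussian terms equal $\Phi(-\tfrac12)\approx 0.31$, and — the key saving — the bad events for the ``max'' adversary nearly coincide (both are essentially $\{W\lesssim\tfrac32\}$), so their union plus the $O(\Phi(-2))$ contribution from $x_0$ being naturally wrong falls below $0.35$; by symmetry the same holds for $y=-1$, giving $\calR_{\text{adv}}^{\text{max}}(g;S_\infty,S_1)\le 35\%$. And $g$ is non-linear — it must be, since a linear $f$ satisfies $\calR_{\text{adv}}^{\text{max}}(f;S_\infty,S_1) = \calR_{\text{adv}}(f;S_\text{affine}) \ge \tfrac12$ by Claim~\ref{thm:linear_affine} and the bound just proved.

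The main obstacle lies entirely in the second half: one must (i) hit on the thresholded rule rather than any linear combination, and (ii) do the Gaussian-tail bookkeeping for the ``max'' adversary carefully enough — in particular track how the attackers' $O(1/\sqrt d)$ budget leftovers (the $\ell_1$ adversary's spare budget after flipping $x_0$, the $\ell_\infty$ adversary's ability to nudge $|S|$ just across $\tau$) shift the effective thresholds — to land strictly below $35\%$. That thin final margin is exactly why the precise constants $d\ge 200$, $\alpha=2$, $\epsilon_\infty=(3/2)\eta$ and $\epsilon_1=3$ are pinned down in the hypothesis.
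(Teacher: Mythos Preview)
Your approach matches the paper's: the affine lower bound is argued identically (same $\beta=2/3$, same perturbation sending $x_0\to 0$ and $x_i\to x_i-y\eta$), and—though you may not have noticed—your threshold rule $g$ with $\tau=\tfrac32\sqrt d$ is \emph{the same function} as the paper's $f(\vec x)=\sign\!\big(3\,\sign(x_0)+\tfrac{2}{\sqrt d}\sum_{i\ge1} x_i\big)$, just presented as a two-branch decision rather than a single score. Your case analysis of the two attacks likewise parallels the paper's (both charge the $\ell_1$ adversary cost $2$ to flip $\sign(x_0)$; strictly cost $1^{+}$ suffices, which only affects the $O(1/\sqrt d)$ slack).
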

\begin{proof}
We first prove that no classifier can achieve accuracy above $50\%$ (which is achieved by the constant classifier) against $S_\text{affine}$. The proof is very similar to the one of Theorem~\ref{thm:linf_l1}.

Let $\beta = 2/3$, so the affine attacker gets to compose an $\ell_\infty$-budget of ${2}/{\sqrt{d}}$ and an $\ell_1$-budget of $1$. Specifically, for a point $(\vec{x}, y) \sim \calD$, the affine adversary will apply the perturbation
\[
\vec{r} = [-x_0, -y\frac{2}{\sqrt{d}}, \dots, -y\frac{2}{\sqrt{d}}] = [-x_0, -y\eta, \dots, -y\eta]\;.
\]
Let $\calG^{0,0}$ be the following distribution:
\[
y \stackrel{\emph{u.a.r}}{\sim} \{-1, +1\}, \quad 
x_0 = 0, \quad
x_1, \dots, x_d \stackrel{\emph{i.i.d}}{\sim} \calN(0, 1) \;.
\]
Note that in $\calG^{0,0}$, $\vec{x}$ is independent of $y$ so no classifier can achieve more than $50\%$ accuracy on $\calG^{0,0}$. Yet, note that the affine adversary's perturbation $\vec{r}$ transforms any $(\vec{x}, y) \sim \calD$ into $(\vec{x}, y) \sim \calG^{0, 0}$.

We now show that there exists a classifier that achieves non-trivial robustness against the set of perturbations $S_\infty \cup S_1$, i.e., the union of $\ell_\infty$-noise with $\epsilon={3}/{\sqrt{d}}$ and $\ell_1$-noise with $\epsilon=3$. Note that by Claim~\ref{thm:linear_affine}, this classifier must be \emph{non-linear}. We define
\[
f(\vec{x}) = \sign\left(3\cdot \sign(x_0) + \sum_{i=1}^d \frac{2}{\sqrt{d}} \cdot x_i\right) \;.
\]
The reader might notice that $f(\vec{x})$ closely resembles the \emph{Bayes optimal classifier} for $\calD$ (which would be a linear classifier). The non-linearity in $f$ comes from the $\sign$ function applied to $x_0$. Intuitively, this limits the damage caused by the $\ell_1$-noise, as $\sign(x_0)$ cannot change by more than $\pm2$ under any perturbation of $x_0$. This forces the $\ell_1$ perturbation budget to be ``wasted'' on the other features $x_1, \dots, x_d$, which are very robust to $\ell_1$ attacks.

As a warm-up, we compute the classifier's natural accuracy on $\calD$. 
For $(\vec{x}, y) \sim \calD$, let $X = y \cdot \sum_{i=1}^d \frac{2}{\sqrt{d}} \cdot x_i$ be a random variable. Recall that $\eta=2/\sqrt{d}$. Note that $X$ is distributed as
\[
y \cdot \sum_{i=1}^d \frac{2}{\sqrt{d}} \cdot\calN(y\eta, 1) = \sum_{i=1}^d \frac{2}{\sqrt{d}}\cdot \calN\left(\frac{2}{\sqrt{d}}, 1\right) = \sum_{i=1}^d \calN\left(\frac{4}{d}, \frac{4}{d}\right) = \calN(4, 4) \;.
\]
Recall that $x_0 = y$ with probability $p_0 = 1-\Phi(-2) \approx 0.977$. We get:
\begin{align*}
\Pr_\calD[f(\vec{x}) = y] &= \Pr_\calD \left[y \cdot \left(3 \cdot \sign(x_0) + \sum_{i=1}^d \frac{2}{\sqrt{d}} \cdot x_i \right) > 0\right] \\
& = \Pr_\calD[x_0 = y] \cdot \Pr_\calD \left[3 \cdot y \cdot \sign(x_0) + X > 0 \mid x_0 = y\right] \\
& \quad+ \Pr_\calD[x_0 \neq y] \cdot \Pr_\calD \left[3 \cdot y \cdot \sign(x_0) + X > 0 \mid x_0 \neq y\right] \\
&= p\cdot \Pr \left[3 + \calN(4, 4) > 0\right] + (1-p)\cdot \Pr \left[-3 + \calN(4, 4) > 0\right] \approx 99\% \;.
\end{align*}

We now consider an adversary that picks either an $\ell_\infty$-perturbation with $\epsilon={3}/{\sqrt{d}}$ or an $\ell_1$-perturbation with $\epsilon=3$. 
It will suffice to consider the case where $x_0 = y$. Note that the $\ell_\infty$ classifier cannot meaningfully perturb $x_0$, and the best perturbation is always $\vec{r}_\infty = [0, -y{3}/{\sqrt{d}}, \dots, -y{3}/{\sqrt{d}}]$. Moreover, the best $\ell_1$-bounded perturbation is $\vec{r}_1 = [-2y, -y, 0, \dots, 0]$. We have $f(\vec{x}+\vec{r}_\infty) = \sign(y \cdot (3 + X -6))$ and $f(\vec{x}+\vec{r}_1) = \sign(y \cdot (-3 + X - {2}/{\sqrt{d}}))$.
We now lower-bound the classifier's accuracy under the union $S_{\text{U}} \coloneqq S_\infty \cup S_1$ of these two perturbation models:
\begin{align*}
\Pr_\calD[f(\vec{x} + \vec{r}) = y, \forall \vec{r} \in S_\text{U}] 
&\geq \Pr_{\calD}[x_0 = y] \cdot \Pr_\calD[f(\vec{x} + \vec{r}) = y, \forall \vec{r} \in S_\text{U} \mid x_0 = y]  \\
& \geq p \cdot \Pr_\calD\left[(3 + X -6 > 0) \wedge (-3 + X - {2}/{\sqrt{d}}) > 0)\right] \\
& = p \cdot \Pr\left[\calN(4, 4) > 3 + {2}/{\sqrt{d}} \right]
\geq 65\% \quad (\text{for } d \geq 200) \;.
\end{align*}
\end{proof}

\section{Proof of Theorem~\ref{thm:affine_RT} (Affine combinations of \texorpdfstring{$\ell_\infty$-}{Linf} and spatial perturbations can affect linear models)}
\label{apx:proof_affine_RT}

Note that our definition of affine perturbation allows for a different weighting parameter $\beta$ to be chosen for each input. Thus, the adversary that selects perturbations from $S_{\text{affine}}$ is at least as powerful as the one that selects perturbations from $S_\infty \cup S_{\text{RT}}$.
All we need to show to complete the proof is that there exists some input $\vec{x}$ that the affine adversary can perturb, while the adversary limited to the union of spatial and $\ell_\infty$ perturbations cannot.

Without loss of generality, assume that the RT adversary picks a permutation that switches $x_0$ with a position in $[0, N-1]$, and leaves all other indices untouched. The main idea is that for any input $\vec{x}$ where the RT adversary moves $x_0$ to position $j < N-1$, the RT adversary with budget $N$ is no more powerful than one with budget $j+1$. The affine adversary can thus limit its rotation-translation budget and use the remaining budget on an extra $\ell_\infty$ perturbation.

We now construct an input $\vec{x}$ such that: (1) $\vec{x}$ cannot be successfully attacked by an RT adversary (with budget $N$) or by an $\ell_\infty$-adversary (with budget $\epsilon$); (2) $\vec{x}$ can be attacked by an affine adversary.

Without loss of generality, assume that $w_1 = \min\{w_1, \dots, w_{N-1}\}$, i.e., among all the features that $x_0$ can be switched with, $x_1$ has the smallest weight.
Let $y=+1$, and let $x_1, \dots, x_{N-1}$ be chosen such that $\argmin\{x_1, \dots, x_{N-1}\} = 1$. We set
\[
x_0 \coloneqq \frac{\epsilon \cdot \norm{\vec{w}}_1}{w_0 - w_1} + x_1 \;.
\]
Moreover, set $x_{N}, \dots, x_d$ such that
\[
\vec{w}^T\vec{x}+b = 1.1 \cdot \epsilon \cdot \norm{\vec{w}}_1 \;.
\]
Note that constructing such an $\vec{x}$ is always possible as we assumed $w_0 > w_i > 0$ for all $1 \leq i \leq d$. 

We now have an input $(\vec{x}, y)$ that has non-zero support under $\calD$. Let $\vec{r}$ be a perturbation with $\norm{\vec{r}}_\infty \leq \epsilon$. We have:
\[
\vec{w}^T(\vec{x}+\vec{r})+b \geq  \vec{w}^T\vec{x}+b - \epsilon \cdot \norm{\vec{w}}_1 = 0.1 \cdot \epsilon \cdot \norm{\vec{w}}_1 > 0 \;,
\]
so $f(\vec{w}^T(\vec{x}+\vec{r})+b) = y$, i.e., $\vec{x}$ cannot be attacked by any $\epsilon$-bounded $\ell_\infty$-perturbation.

Define $\hat{\vec{x}}_i$ as the input $\vec{x}$ with features $x_0$ and $x_i$ switched, for some $0 \leq i < N$. Then,
\begin{align*}
\vec{w}^T\hat{\vec{x}}_i+b &= \vec{w}^T\vec{x}+b - (w_0 - w_i) \cdot (x_0 - x_i) \\
&\geq  \vec{w}^T\vec{x}+b - (w_0 - w_1) \cdot (x_0 - x_1) \\
&= \vec{w}^T\vec{x}+b - \epsilon \cdot \norm{\vec{w}}_1 = 0.1 \cdot \epsilon \cdot \norm{\vec{w}}_1 > 0 \;.
\end{align*}
Thus, the RT adversary cannot change the sign of $f(\vec{x})$ either. This means that an adversary that chooses from $S_\infty \cup S_{\text{RT}}$ cannot successfully perturb $\vec{x}$.

Now, consider the affine adversary, with $\beta=2/N$ that first applies an RT perturbation with budget $\frac{2}{N} \cdot N = 2$ (i.e., the adversary can only flip $x_0$ with $x_1$), followed by an $\ell_\infty$-perturbation with budget $(1-\frac{2}{N}) \cdot \epsilon$. Specifically, the adversary flips $x_0$ and $x_1$ and then adds noise $\vec{r} = -(1-\frac{2}{N}) \cdot \epsilon \cdot \sign(\vec{w})$. Let this adversarial example by $\hat{\vec{x}}_{\text{affine}}$. We have
\begin{align*}
\vec{w}^T\hat{\vec{x}}_{\text{affine}}+b &= \vec{w}^T\vec{x}+b -(w_0 - w_1) \cdot (x_0 - x_1) - \left(1-\frac{2}{N}\right) \cdot \epsilon \cdot \norm{\vec{w}}_1 \\
& = 1.1\cdot \epsilon \cdot \norm{\vec{w}}_1 - \epsilon \cdot \norm{\vec{w}}_1 - \left(1-\frac{2}{N}\right) \cdot \epsilon \cdot \norm{\vec{w}}_1\\
& = -\left(0.9-\frac{2}{N}\right) \cdot \epsilon \cdot \norm{\vec{w}}_1\\
& <0 \;.
\end{align*}
Thus, $f(\hat{\vec{x}}_{\text{affine}}) = -1 \neq y$, so the affine adversary is strictly stronger that the adversary that is restricted to RT or $\ell_\infty$ perturbations. \qed

\fi

\end{document}